\theoremstyle{plain}
\newtheorem{definition}{Definition}[section]
\newtheorem{remark}{Remark}[section]
\newcommand{\Id}{\text{\it Id}}
\definecolor{shadecolor}{gray}{0.9}
\definecolor{amaranth}{rgb}{0.9, 0.17, 0.31}
\definecolor{airforceblue}{rgb}{0.36, 0.54, 0.66}
\newacronym{mcmc}{MCMC}{Markov chain Monte Carlo}
\newacronym{ode}{ODE}{ordinary differential equation}
\newacronym{sde}{SDE}{stochastic differential equation}
\newacronym{gmm}{PDF}{Gaussian mixture model}
\newacronym{pdf}{PDF}{probability density function}
\def\1{\bm{1}}
\def\eps{{\epsilon}}
\DeclareMathAlphabet{\mathsfit}{\encodingdefault}{\sfdefault}{m}{sl}
\SetMathAlphabet{\mathsfit}{bold}{\encodingdefault}{\sfdefault}{bx}{n}
\newcommand{\E}{\mathbb{E}}
\newcommand{\R}{\mathbb{R}}
\icmltitlerunning{Stochastic Interpolants with Data-Dependent Couplings}
\begin{document}

\twocolumn[
\icmltitle{Stochastic Interpolants with Data-Dependent Couplings}



\icmlsetsymbol{equal}{*}

\begin{icmlauthorlist}
\icmlauthor{Michael S.~Albergo}{equal,1}
\icmlauthor{Mark Goldstein}{equal,2}
\icmlauthor{Nicholas M. Boffi}{2}
\icmlauthor{Rajesh Ranganath}{2,3}
\icmlauthor{Eric Vanden-Eijnden}{2}
\end{icmlauthorlist}
\icmlaffiliation{1}{Center for Cosmology and Particle Physics, New York University}
\icmlaffiliation{2}{Courant Institute of Mathematical Sciences, New York University}
\icmlaffiliation{3}{Center for Data Science, New York University}
\icmlcorrespondingauthor{Michael S.~Albergo}{albergo@nyu.edu}
\icmlcorrespondingauthor{Mark Goldstein}{goldstein@nyu.edu}

\icmlkeywords{Machine Learning, ICML}

\vskip 0.3in
]



\printAffiliationsAndNotice{\icmlEqualContribution} 

\begin{abstract}
Generative models inspired by dynamical transport of measure -- such as flows and diffusions -- construct a continuous-time map between two probability densities. 
Conventionally, one of these is the target density, only accessible through samples, while the other is taken as a simple base density that is data-agnostic.
In this work, using the framework of stochastic interpolants, we formalize how to \textit{couple} the base and the target densities, whereby samples from the base are computed conditionally given samples from the target in a way that is different from (but does not preclude) incorporating information about class labels or continuous embeddings.
This enables us to construct dynamical transport maps  that serve as conditional generative models.
We show that these transport maps can be learned by solving a simple square loss regression problem analogous to the standard independent setting.
We demonstrate the usefulness of constructing dependent couplings in practice through experiments in super-resolution and in-painting.
The code is available at 
\href{https://github.com/interpolants/couplings}{https://github.com/interpolants/couplings}.
\end{abstract}

\section{Introduction}
\label{sec:intro}

\begin{figure}[ht]
  \begin{center}
\includegraphics[width=0.40\textwidth]{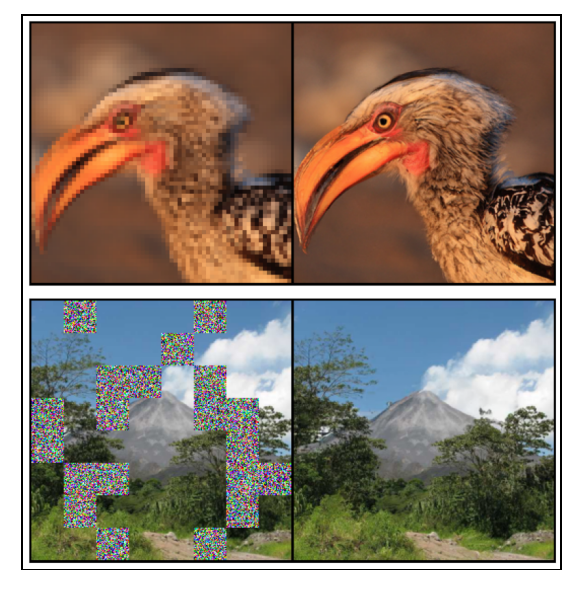}
\vspace{-0.4cm}
    \caption{\textbf{Examples.} Super-resolution and in-painting results computed with our formalism.}
  \end{center}
     \label{fig:front_page}
\end{figure}

Generative models such as normalizing flows and diffusions sample from a target density $\rho_1$ by continuously transforming samples from a base density $\rho_0$ into the target.
This transport is accomplished by means of an 
\gls{ode} or \gls{sde}, which takes as initial condition a sample from $\rho_0$ and produces at time $t=1$ an approximate sample from $\rho_1$.
Typically, the base density is taken to be something simple, analytically tractable, and easy to sample, such as a standard Gaussian.
In some formulations, such as score-based diffusion
\citep{sohl2015deep,
song2020improved,
ho2020denoising,
song2020score, singhal2023diffuse}, a Gaussian base density is intrinsically tied to the process achieving the transport.  In others, including flow matching
\citep{lipman2022flow,chen2023riemannian},  rectified flow
\citep{liu2022flow, liu2023instaflow}, and stochastic interpolants
\citep{albergo2022building, albergo2023stochastic},  a Gaussian base is not required, but is often chosen for convenience.
In these cases, the choice of Gaussian base represents an absence of prior knowledge about the problem structure, and existing works have yet to fully explore the strength of base densities adapted to the target.

In this work, we introduce a general formulation  of stochastic interpolants in which a base density is produced via a \textit{coupling}, whereby samples of this base are computed conditionally given samples from the target.
We construct a continuous-time stochastic process that interpolates between the coupled base and target, and we characterize the resulting transport by identification of a continuity equation obeyed by the time-dependent density.
We show that the velocity field defining this transport can be estimated by solution of an efficient, simulation-free square loss regression problem analogous to standard, data-agnostic interpolant and flow matching algorithms.

In our formulation, we also allow for dependence on an external, conditional source of information independent of $\rho_1$, which we call $\xi$. This extra source of conditioning is standard, and can be used in the velocity field $b_t(x,\xi)$ to accomplish class-conditional generation, or generation conditioned on a continuous embedding such as a textual representation or problem-specific geometric information. As illustrated in Fig.~\ref{fig:pictorial}, it is however different from the data-dependent coupling that we propose. 
Below, we suggest some generic ways to construct coupled, conditional base and target densities, and we consider practical applications to image super-resolution and in-painting, where we find improved performance by incorporating both a data-dependent coupling and the conditioning variable.
\begin{figure*}[ht]
    \centering
    \includegraphics[width=0.88\linewidth]{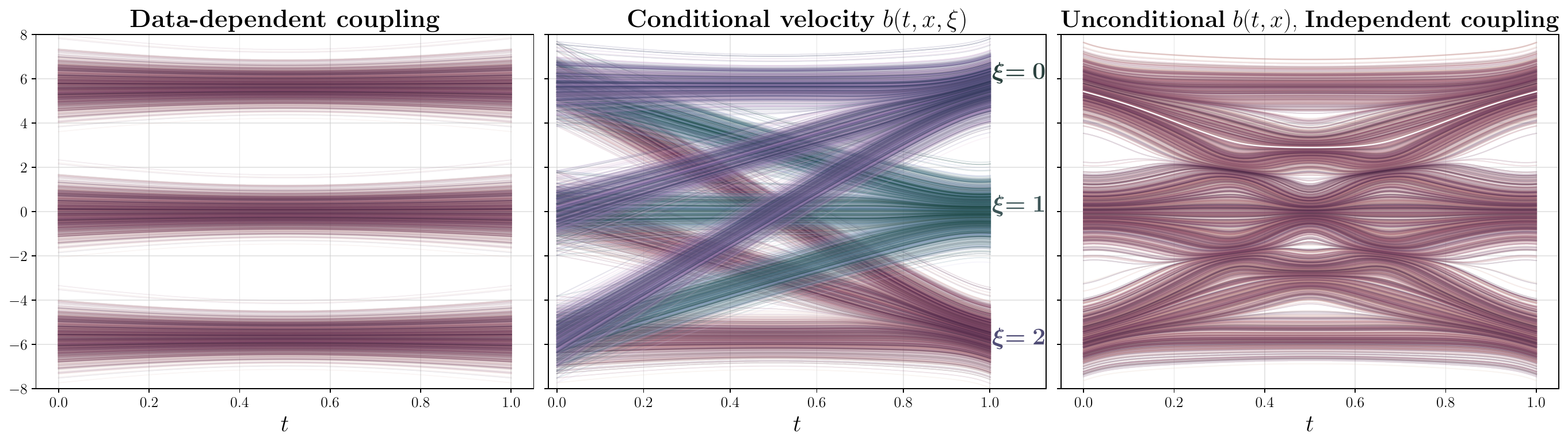}
    \caption{\textbf{Data-dependent couplings are different than conditioning.}
    Delineating between constructing couplings versus conditioning the velocity field, and their implications for the corresponding probability flow $X_t$. The transport problem is flowing from a Gaussian Mixture Model (GMM) with 3 modes to another GMM with 3 modes.
    \textit{Left}: The probability flow $X_t$ arising from the data-dependent coupling $\rho(x_0, x_1) = \rho_1(x_1)\rho_0(x_0 | x_1)$. All samples follow simple trajectories. No formation of auxiliary modes form in the intermediate density $\rho(t)$, in juxtaposition to the independent case. 
    \textit{Center}: When the velocity field is conditioned $b_t(x,\xi)$ on each class (mode), it factorizes, resulting in three separate probability flows $X_t^{\xi}$ with $\xi = 1,2,3$.
    \textit{Right}: The probability flow $X_t$ when taking an unconditional velocity field $b_t(x)$ and an independent coupling $\rho(x_0, x_1) = \rho_0(x_0)\rho_1(x_1)$. Note the complexity of the underlying transport, which motivates us to consider finding correlated base variables directly in the data.
    }
    \label{fig:pictorial}
\end{figure*}
Together, our \textbf{main contributions} can be summarized as:
\begin{enumerate}
    \item We define a broader way of constructing base and target pairs in generative models based on dynamical transport that adapts the base to the target. 
    In addition, we formalize the use of conditional information -- both discrete and continuous -- in concert with this new form of \textit{data coupling} in the stochastic interpolant framework.
    As special cases of our general formulation, we obtain several recent variants of conditional generative models that have appeared in the literature.
    \item We provide a  characterization of the transport that results from conditional, data-dependent generation, and analyze theoretically how these factors influence the resulting time-dependent density
    \item We provide an empirical study on the effect of coupling for stochastic interpolants, which have recently been shown to be a promising, flexible class of generative models.  We demonstrate the utility of data-dependent base densities and the use of conditional information in two canonical applications, image inpainting and super-resolution, which highlight the performance gains that can be obtained through the application of the tools developed here   . 
\end{enumerate}
The rest of the paper is organized as follows.
In~\cref{sec:related}, we describe some related work in conditional generative modeling.
In~\cref{sec:theo}, we introduce our theoretical framework. We characterize the transport that results from the use of data-dependent couplings, and discuss the difference between this approach and conditional generative modeling.
In~\cref{sec:experiments}, we apply the framework to numerical experiments on ImageNet, focusing on image inpainting and image super-resolution.
We conclude with some remarks and discussion in~\cref{sec:disc}.

\section{Related Work \label{sec:related}}
\begin{table*}[bt]
\caption{
\textbf{Couplings.}
Standard formulations of flows and diffusions construct generative models built upon an independent coupling \citep{albergo2022building,albergo2023stochastic, lipman2022flow, liu2022flow}. 
\cite{lee2023minimizing} learn $q_\phi(x_0|x_1)$ jointly with the velocity to define the coupling during training, but instead sample from $\rho_0={\sf N}(0,\Id)$ for generation.
\cite{tong2023improving} and \cite{pooladian2023multisample} build couplings by running mini-batch optimal transport algorithms 
\citep{cuturi2013sinkhorn}. 
Here we focus on couplings enabled by our generic formalism, which bears similarities with \cite{liu20232,somnath2023aligned}, and can be individualized to each generative task.}
\label{tab:couplings}
\begin{center}
\begin{tabular}{lll}
\toprule
\multicolumn{1}{c}{\bf Coupling PDF~$\rho(x_0,x_1)$}
&\multicolumn{1}{c}{\bf Base PDF}
&\multicolumn{1}{c}{\bf Description}
\\ \midrule 
$\rho_1(x_1)\rho_0(x_0)$ & $x_0 \sim {\sf N}(0,\Id)$ & Independent\\
$\rho(x_0|x_1)\rho_1(x_1)$ & $x_0 \sim q_\phi(x_0|x_1)$
&  Learned conditional\\
 $\text{mb-OT}(x_1, x_0)$ &  $x_0\sim {\sf N}(0,\Id)$
 & Minibatch OT
\\ 
\midrule 
$ \rho_1(x_1)\rho_0(x_0|x_1)$   
&  $x_0 \sim \rho_0(x_0|x_1)$ & Dependent-coupling \textbf{(this work)}\\
     \bottomrule
\end{tabular}
\end{center}
\end{table*}

\paragraph{Couplings.} 

Several works have studied the question of how to build couplings, primarily from the viewpoint of optimal transport theory. 
An initial perspective in this regard comes from \cite{pooladian2023multisample, tong2023improving, klein2023equivariant}, who state an unbiased means for building entropically-regularized optimal couplings from minibatches of training samples. 
This perspective is appealing in that it may give probability flows that are straighter and hence more easily computed using simple \gls{ode} solvers. However, it relies on estimating an optimal coupling over minibatches of the entire dataset, which, for large datasets, may become uninformative as to the true coupling. In an orthogonal perspective, \cite{lee2023minimizing} presented an algorithm to learn a coupling between the base and the target by building dependence on the target into the base. They argue that this can reduce curvature of the underlying transport. While this perspective empirically reduces the curvature of the flow lines, it introduces a potential bias in that they still sample from an independent base, possibly not equal to the marginal of the learned conditional base. Learning a coupling can also be achieved by solving the Schr\"odinger bridge problem, as investigated e.g. in \cite{bortoli2021,shi2023diffusion}. This leads to iterative algorithms that require solving pairs of SDEs until convergence, which is costly in practice. 
More closely connected to our work are the approaches proposed in \cite{liu20232,somnath2023aligned}: by considering generative modeling through the lens of diffusion bridges with \textit{known} coupling, they arrive to a formulation that is operationally similar to, but less general than,  ours. Our approach is simpler, and more flexible, as it differentiates between the bridging of the densities and the construction of the generative models. \Cref{tab:couplings} summarizes these couplings along with the standard independent pairing.

\paragraph{Generative Modeling and Dynamical Transport.}

Generative models built upon dynamical transport of measure go back at least to~\citep{tabak_density_2010, tabak2013}, and were further developed in~\citep{rezende_variational_2015, dinh_density_2017, huang_deep_2016, durkan2019} using compositions of discrete maps, while modern models are typically formulated via a continuous-time transformation. 
In this context, a major advance was the introduction of score-based diffusion~\citep{song_scorebased_2021, song2021mle}, which relates to denoising diffusion probabilistic models~\citep{ho2020}, and allows one to generate samples by learning to reverse a stochastic differential equation that maps the data into samples from a Gaussian base density.
Methods such as flow matching~\citep{lipman2022}, rectified flow~\citep{liu2022-ot, liu2022}, and stochastic interpolants~\citep{albergo2022building, albergo2023stochastic} expand on the idea of  building stochastic processes that connect a base density to the target, but allow for bases that are more general than a Gaussian density. 
Typically, these constructions assume that the samples from the base and the target are uncorrelated.

\paragraph{Conditional Diffusions and Flows for Images.}

\cite{saharia2022image, ho2022cascaded} build diffusions for super-resolution, where low-resolution images are given as inputs to a score model,
which formally learns a conditional score \citep{ho2022classifier}.
In-painting can be seen as a form of conditioning where the conditioning set determines some coordinates in the target space.
In-painting diffusions have been applied to video generation \citep{ho2022video} and protein backbone generation \citep{trippe2022diffusion}. In the \textit{replacement method} 
 one directly inputs the clean values of the known coordinates at each step of integration \citep{ho2022video}; \cite{schneuing2022structure} replace with draws of the diffused state of the known coordinates.
\cite{trippe2022diffusion, wu2023practical} discuss approximation error
in this approach and correct with sequential Monte-Carlo. We revisit this problem framing from the velocity modeling perspective 
in  \cref{sec:inp}.
Recent work has applied
flows to high-dimensional conditional modeling
\citep{dao2023flow,hu2023latent}. A Schr\"odinger bridge perspective on the conditional generation problem was presented in \citep{shi2022conditional}.

\section{Stochastic interpolants with couplings}
\label{sec:theo}

Suppose that we are given a dataset $\{x_1^i\}_{i=1}^n$.
The aim of a generative model is to draw new samples assuming that the data set comes from a 
\gls{pdf} $\rho_1(x_1) $.
Following the stochastic interpolant framework~\citep{albergo2022building, albergo2023stochastic}, we introduce a time-dependent stochastic process that interpolates between samples from a simple base density $\rho_0(x_0)$  at time $t=0$ and samples from  the target $\rho_1(x_1)$ at time $t=1$:
\begin{definition}[Stochastic interpolant with coupling]
\label{def:cond:interp}
The  stochastic interpolant $I_t$ is the process defined as\footnote{%
More generally, we may set $I_t= I(t,x_0,x_1)$ in~\eqref{eq:stochinterpolant}, where $I$ satisfies some regularity properties in addition to the boundary conditions $I(t=0,x_0,x_1)=x_0$ and $I(t=1,x_0,x_1)=x_1$~\citep{albergo2022building,albergo2023stochastic}. 
For simplicity, we will stick to the linear choice $I(t,x_0,x_1) = \alpha_t x_0 + \beta_t  x_1$.} 
\begin{equation}
    \label{eq:stochinterpolant}
    I_t = \alpha_t  x_0 + \beta_t  x_1 + \gamma_t  z\qquad t\in[0,1],
\end{equation}
where
\begin{itemize}[leftmargin=0.15in]
\item  $\alpha_t $, $\beta_t $, and $\gamma^2_t$ are differentiable functions of time such that $\alpha_0=\beta_1 =1$,  $\alpha_1=\beta_0 =\gamma_0 = \gamma_1=0$, and $\alpha^2_t+\beta^2_t+\gamma^2_t>0$ for all $t\in[0,1]$.
\item The pair $(x_0,x_1)$ is jointly drawn from a probability density $ \rho(x_0,x_1)$ with finite second moments and such that 
   \begin{align}
       \label{eq:rhoc:cond}
       \int_{\R^d} \rho(x_0,x_1) dx_1 = \rho_0(x_0), \\ \int_{\R^d} \rho(x_0,x_1) dx_0 =  \rho_1(x_1).
   \end{align}
   \item $z \sim \mathsf N(0,\Id)$, independent of $(x_0,x_1)$.
\end{itemize}
\end{definition}

A simple instance of \eqref{eq:stochinterpolant} uses $\alpha_t  = 1-t$, $\beta_t =t$, and $\gamma_t  = \sqrt{2t(1-t)}$.

The stochastic interpolant framework uses information about the process $I_t$ to derive either an \gls{ode} or an \gls{sde}
 whose solutions $X_t$ push the law of $x_0$ onto the law of $I_t$ for all times $t\in[0, 1]$.

As shown in~\cref{sec:con:tranp}, the drift coefficients in these \glspl{ode}/\glspl{sde} can be estimated by quadratic regression.
They can then be used as generative models, owing to the property that the process $x_t$ specified in~\cref{def:cond:interp} satisfies $I_{t=0}=x_0 \sim \rho_0(x_0)$ and $I_{t=1}=x_1 \sim \rho_1(x_1)$, and hence samples the desired target density.
By drawing samples $x_0\sim \rho_0(x_0)$ and using them as initial data $X_{t=0} = x_0$ in the \glspl{ode}/\glspl{sde}, we can then generate samples $X_{t=1}\sim \rho_1(x_1)$ via numerical integration.

In the original stochastic interpolant papers, this construction was made using the choice $\rho(x_0,x_1) = \rho_0(x_0) \rho_1(x_1)$, so that $x_0$ and $x_1$ were drawn independently from the base and the target.

\emph{Our aim here is to build generative models that are more powerful and versatile by exploring and exploiting dependent couplings between $x_0$ and $x_1$ via suitable definition of $\rho(x_0,x_1)$. }
\begin{remark}[Incorporating conditioning]
    \label{rem:1}
    Our formalism allows (but does not require) that each data point $x_1^i\in \R^d$  comes with a label $\xi_i\in D$, such as a discrete class or a continuous embedding like that of a text caption. In this setup, our results can be straightforwardly generalized  by making all the quantities (PDF, velocities, etc.) conditional on $\xi$. This is discussed in Appendix~\ref{sec:proofs} and used in various forms in our numerical examples.
\end{remark}

\subsection{Transport equations and conditional generative models}
\label{sec:con:tranp}

In this section, we show that the probability distribution of the process $I_t$ defined in~\eqref{eq:stochinterpolant} has a time-dependent density  $\rho_t(x)$ that interpolates between $\rho_0(x)$ and $\rho_1(x)$.
We characterize this density as the solution of a transport equation, and we show that both the corresponding velocity field and the score $\nabla\log\rho_t(x)$ are minimizers of simple quadratic objective functions.

This result enables us to construct conditional generative models by approximating the velocity (and possibly the score) via minimization over a rich parametric class such as neural networks.
We first define the functions:
\begin{align}
    \label{eq:g:c}
    b_t(x) = \E(\dot I_t|I_t = x), \quad  g_t(x) = \E(z|I_t = x),
\end{align}
where the dot denotes time-derivative and $\E(\cdot|I_t=x)$ denotes the expectation over $\rho(x_0,x_1)$ conditional on $I_t=x$. We then have,
\begin{restatable}[Transport equation with coupling]{theorem}{cont0}
    \label{th:pdf0}
    The probability distribution of the stochastic interpolant $I_t$ defined in~\eqref{eq:stochinterpolant} has a density $\rho_t(x)$ that satisfies $\rho_{t=0}(x)= \rho_0(x)$ and $ \rho_{t=1}(x) = \rho_1(x)$, and solves the transport equation 
    \begin{align}
        \label{eq:tranp:fpe:c}
        \partial_t \rho_t(x)+ \nabla\cdot\left( b_t(x) \rho_t(x)\right) = 0,
    \end{align}
    where the velocity field $b_t(x)$ is defined in~\eqref{eq:g:c}.
    Moreover, for every $t$ such that $\gamma_t  \neq 0$, the following identity for the score holds
    \begin{equation}
        \label{eq:s:c}
        \nabla \log \rho_t(x)= -\gamma^{-1}_t g_t(x).
    \end{equation}
    Finally, the functions $b$ and $g$ are the unique minimizers of the objectives
        \begin{equation}
        \label{eq:obj}
        \begin{aligned}
            L_b(\hat b) &= \int_0^1 \E \left[|\hat b_t(I_t)|^2 - 2\dot I_t\cdot \hat b_t(I_t)\right] dt,\\
            L_g(\hat g) &= \int_0^1 \E \left[|\hat g_t(I_t)|^2 - 2z\cdot \hat g_t(I_t)\right] dt
        \end{aligned}
    \end{equation}
where   $\E$ denotes an expectation over $(x_0,x_1)\sim \rho(x_0,x_1)$ and $z\sim\mathsf N(0,\Id)$ with $(x_0,x_1)\perp z$.
\end{restatable}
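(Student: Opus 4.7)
The plan is to prove all four claims (boundary conditions, transport equation, score identity, and regression optimality) by working directly from the definition $I_t = \alpha_t x_0 + \beta_t x_1 + \gamma_t z$ with $(x_0,x_1) \sim \rho$ and $z \sim \mathsf N(0,\Id)$ independent of the coupling. The boundary conditions are immediate: plugging $t=0$ and $t=1$ into the boundary constraints on $\alpha,\beta,\gamma$ yields $I_0 = x_0 \sim \rho_0$ and $I_1 = x_1 \sim \rho_1$ by the marginal conditions~\eqref{eq:rhoc:cond}. Crucially, nothing in what follows will require independence of $x_0$ and $x_1$: the proof uses only the joint law $\rho(x_0,x_1)$ through expectations.

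For the transport equation, I will use the standard duality-with-test-functions strategy. For any smooth compactly supported $\phi:\R^d\to\R$, differentiate $\E[\phi(I_t)]$ in time, use the chain rule to get $\E[\nabla\phi(I_t)\cdot \dot I_t]$, then apply the tower property to condition on $I_t=x$ and replace $\dot I_t$ by $b_t(I_t)=\E(\dot I_t\mid I_t)$. This yields
\begin{equation*}
    \frac{d}{dt}\int \phi(x)\rho_t(x)\,dx = \int \nabla\phi(x)\cdot b_t(x)\rho_t(x)\,dx,
\end{equation*}
which is the weak form of~\eqref{eq:tranp:fpe:c}; integration by parts (valid since $\phi$ has compact support) gives the stated strong form.

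For the score identity~\eqref{eq:s:c} I will use Gaussian integration by parts (Stein's lemma) on $z$. Because $z\sim\mathsf N(0,\Id)$ is independent of $(x_0,x_1)$, for any smooth $\phi$,
\begin{equation*}
    \E[z\,\phi(I_t)] = \gamma_t\,\E[\nabla\phi(I_t)] = -\gamma_t\int \phi(x)\nabla\rho_t(x)\,dx.
\end{equation*}
Conditioning the left-hand side on $I_t=x$ gives $\int \phi(x) g_t(x)\rho_t(x)\,dx$, so matching integrands yields $g_t(x)\rho_t(x) = -\gamma_t\nabla\rho_t(x)$, and dividing by $\gamma_t\rho_t(x)$ produces the claim whenever $\gamma_t\neq 0$. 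Here the assumption $\alpha_t^2+\beta_t^2+\gamma_t^2>0$ together with $\gamma_t\neq 0$ is what ensures $\rho_t$ has a well-defined density (the noise smooths out any degeneracy in the coupling).

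For the variational characterization, the trick is to complete the square: $L_b(\hat b) = \int_0^1 \E[|\hat b_t(I_t) - \dot I_t|^2]\,dt - C_b$ with $C_b = \int_0^1\E[|\dot I_t|^2]\,dt$ independent of $\hat b$, and similarly for $L_g$ with the constant $\int_0^1 \E[|z|^2]\,dt$. Minimizing these quadratic losses is then the standard $L^2$ projection problem, whose unique minimizer is the conditional expectation: $\hat b_t(x)=\E(\dot I_t\mid I_t=x)=b_t(x)$ and $\hat g_t(x)=\E(z\mid I_t=x)=g_t(x)$. Uniqueness holds in the sense of equality $\rho_t$-almost everywhere. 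The only minor technical point I anticipate is justifying the exchange of $\partial_t$ and the expectation in the transport argument and guaranteeing existence of the density $\rho_t$; both follow from the finite-second-moment assumption on $\rho(x_0,x_1)$ together with differentiability of $\alpha_t,\beta_t,\gamma_t$, and the smoothing effect of $\gamma_t z$ when $\gamma_t\neq 0$. I expect no substantive obstacle beyond this, since the argument is structurally identical to the independent-coupling case and the coupling only enters through the joint expectation.
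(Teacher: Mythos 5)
Your proposal is correct and follows essentially the same route as the paper's proof: boundary conditions from the endpoint constraints on $\alpha_t,\beta_t,\gamma_t$, the weak form of the transport equation via test functions plus the tower property, Gaussian integration by parts for the score identity, and conditioning on $I_t=x$ to reduce the losses to an $L^2$ projection whose unique minimizer is the conditional expectation. The only cosmetic difference is that the paper carries out the Gaussian integration by parts in Fourier space (using the characteristic function $e^{ik\cdot I_t}$ as the test function) whereas you use Stein's lemma with general smooth test functions, and the paper completes the square implicitly rather than explicitly; these are equivalent.
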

A more general version of this result with a conditioning variable is proven in~\cref{sec:proofs}.
The objectives~\eqref{eq:obj} can readily be estimated in practice from samples $(x_0,x_1)\sim \rho(x_0,x_1)$ and $z \sim \mathsf N(0, 1)$, which will enable us to learn approximations for use in a generative model.

The transport equation~\eqref{eq:tranp:fpe:c} can be used to derive generative models, as we now show.
\begin{restatable}[Probability flow and diffusions with coupling]{corollary}{pflow0}
\label{th:gen:mod0} 
The solutions to the probability flow equation 
\begin{equation}
    \label{eq:prob:flow}
    \dot X_t = b_t(X_t)
\end{equation}
enjoy the property that
\begin{align}
X_{t=1} &\sim \rho_1(x_1)   
\quad \text{ if } \quad X_{t=0} \sim \rho_0(x_0)\\
X_{t=0} &\sim \rho_0(x_0)
\quad \text{ if } \quad X_{t=1} \sim \rho_1(x_1)
\end{align}
In addition, for any $\eps_t\ge0$, solutions to the forward \gls{sde}
 \begin{equation}
    \label{eq:sde:f}
    d X^F_t = b_t(X^F_t) dt - \eps_t \gamma^{-1}_tg_t(X^F_t) dt + \sqrt{2\eps_t} dW_t,
\end{equation}
enjoy the property that
\begin{align}
 X^F_{t=1} \sim \rho_1(x_1) \quad \text{if} \quad X^F_{t=0} \sim \rho_0(x_0),
\end{align}
and solutions to the backward \gls{sde}
 \begin{equation}
        \label{eq:sde:r}
        d X^R_t = b_t(X^R_t) dt + \eps_t \gamma^{-1}_tg_t(X^R_t) dt + \sqrt{2\eps_t} dW_t,
    \end{equation}
enjoy the property that
\begin{align}
 X^R_{t=0} \sim \rho_0(x_0) \quad 
 \text{if} \quad
 X^R_{t=1} \sim \rho_1(x_1).
\end{align}
\end{restatable}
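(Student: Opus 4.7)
The plan is to deduce each statement from the uniqueness of solutions to an associated linear PDE whose solution is $\rho_t$ (the density identified in \Cref{th:pdf0}): the continuity equation for the ODE, and the Fokker-Planck equation for the two SDEs. The key algebraic input is the score identity $\gamma_t^{-1} g_t = -\nabla \log \rho_t$ from \Cref{th:pdf0}, which produces the exchange rule $\epsilon_t \Delta \rho_t = -\nabla\cdot(\epsilon_t \gamma_t^{-1} g_t \rho_t)$ and thereby lets one trade the Laplacian for a divergence (or vice versa) inside the transport equation~\eqref{eq:tranp:fpe:c}.

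For the probability flow ODE, the law $\mu_t$ of any solution with $\mu_0 = \rho_0$ satisfies the continuity equation $\partial_t \mu_t + \nabla\cdot(b_t \mu_t) = 0$ by a standard characteristics/pushforward argument. Since \Cref{th:pdf0} identifies $\rho_t$ as a solution of the same equation with the same initial datum, uniqueness of the linear transport equation (under the second-moment and regularity assumptions already imposed in \Cref{def:cond:interp}) forces $\mu_t = \rho_t$, and hence $X_1 \sim \rho_1$. Because the continuity equation is first order in time, running the flow backward from a terminal datum $\rho_1$ returns $\rho_0$, giving the reverse implication.

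For the forward SDE, substituting the exchange rule into~\eqref{eq:tranp:fpe:c} rewrites it as
\begin{equation*}
\partial_t \rho_t + \nabla\cdot\bigl((b_t - \epsilon_t \gamma_t^{-1} g_t)\rho_t\bigr) = \epsilon_t \Delta \rho_t,
\end{equation*}
which is precisely the Fokker--Planck equation associated with~\eqref{eq:sde:f}. The marginal law of $X^F_t$ also satisfies this FPE (Kolmogorov's forward equation), so uniqueness with initial datum $\rho_0$ identifies the two and gives $X^F_1 \sim \rho_1$.

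For the backward SDE, the opposite sign in the exchange rule yields
\begin{equation*}
\partial_t \rho_t + \nabla\cdot\bigl((b_t + \epsilon_t \gamma_t^{-1} g_t)\rho_t\bigr) = -\epsilon_t \Delta \rho_t,
\end{equation*}
whose wrong-signed Laplacian is handled by time reversal. Setting $s = 1 - t$ and $\tilde\rho_s = \rho_{1-s}$ converts this into a standard forward-in-$s$ Fokker--Planck equation, which I will then match with the FPE of the time-reversed process $\tilde X_s = X^R_{1-s}$ (whose drift inherits a sign flip from $ds = -dt$ and whose driving Brownian motion is replaced by a reversed one). With terminal data $X^R_1 \sim \rho_1 = \tilde\rho_0$, uniqueness of the FPE in $s$ then gives $\tilde X_1 \sim \rho_0$, i.e.\ $X^R_0 \sim \rho_0$. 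The main obstacle I anticipate is the bookkeeping of signs and Brownian-motion conventions in this time-reversal step; the underlying analytic content (well-posedness/uniqueness of the linear PDEs) is otherwise standard under the hypotheses in force.
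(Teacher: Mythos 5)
Your proposal is correct and follows essentially the same route as the paper: the paper likewise identifies the ODE as the characteristic equation of the transport equation~\eqref{eq:tranp:fpe:c}, and uses the score identity $\nabla\log\rho_t = -\gamma_t^{-1}g_t$ together with $\Delta\rho_t = \nabla\cdot(\nabla\log\rho_t\,\rho_t)$ to rewrite that equation as the forward and backward Fokker--Planck equations associated with~\eqref{eq:sde:f} and~\eqref{eq:sde:r}. Your explicit time-reversal substitution $s=1-t$ for the backward SDE is just a more detailed rendering of the paper's remark that the backward equation is ``to be solved backward in time.''
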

A more general version of this result with conditioning is also proven in~\cref{sec:proofs}.

\Cref{th:gen:mod0} shows that the coupling can be incorporated  both in deterministic and stochastic generative models derived within the stochastic interpolant framework. In what follows, for simplicity we will focus on the deterministic probability flow ODE~\eqref{eq:prob:flow}.

An important observation is that the transport cost of the generative model based on the probability flow ODE~\eqref{eq:prob:flow}, which impacts the numerical stability of solving this ODE, is controlled by the time dynamics of the interpolant, as shown by our next result:
\begin{restatable}[Control of transport cost]{proposition}{cost}
    \label{th:transport}
    Let $X_t(x_0)$ be the solution to the probability flow ODE~\eqref{eq:prob:flow} for the initial condition~$X_{t=0} (x_0) = x_0 \sim \rho_0$. Then 
    \begin{equation}
    \label{eq:W2:bound}
        \E_{x_0\sim \rho_0}  \big[|X_{t=1}(x_0)-x_0|^2 \big] \le \int _0^1 \E[|\dot I_t|^2] dt <\infty
    \end{equation}
\end{restatable}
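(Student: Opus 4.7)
The plan is to view the left-hand side as a time integral of the velocity along the flow, then exploit two facts in sequence: the probability flow preserves the interpolant marginal ($X_t \sim \rho_t$), and $b_t(x)$ is a conditional expectation, which contracts the $L^2$ norm.

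First, I would write the displacement as an integral of the velocity:
\begin{equation*}
X_{t=1}(x_0) - x_0 = \int_0^1 \dot X_t(x_0)\, dt = \int_0^1 b_t(X_t(x_0))\, dt.
\end{equation*}
Applying the Cauchy--Schwarz (equivalently, Jensen's) inequality in $t$, I obtain
\begin{equation*}
|X_{t=1}(x_0) - x_0|^2 \le \int_0^1 |b_t(X_t(x_0))|^2\, dt.
\end{equation*}
Taking the expectation over $x_0\sim \rho_0$ and exchanging the order of integration, I reduce the claim to bounding $\int_0^1 \E_{x_0\sim\rho_0}[|b_t(X_t(x_0))|^2]\, dt$ by $\int_0^1 \E[|\dot I_t|^2]\, dt$.

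Next, by Theorem~\ref{th:pdf0}, the flow generated by $b_t$ transports $\rho_0$ to $\rho_t$, so $X_t(x_0)$ has the same law as $I_t$ when $x_0\sim\rho_0$. Hence
\begin{equation*}
\E_{x_0\sim\rho_0}[|b_t(X_t(x_0))|^2] = \E[|b_t(I_t)|^2].
\end{equation*}
Now I invoke the defining identity $b_t(I_t) = \E(\dot I_t \mid I_t)$ together with the conditional Jensen inequality (i.e.\ the $L^2$ contractivity of conditional expectation):
\begin{equation*}
\E[|b_t(I_t)|^2] = \E\bigl[|\E(\dot I_t \mid I_t)|^2\bigr] \le \E[|\dot I_t|^2].
\end{equation*}
Integrating in $t$ and combining with the Cauchy--Schwarz step above yields the desired bound.

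Finally, for finiteness of $\int_0^1 \E[|\dot I_t|^2]\, dt$, I use the explicit form $\dot I_t = \dot\alpha_t x_0 + \dot\beta_t x_1 + \dot\gamma_t z$ and expand the square. Since $\dot\alpha_t, \dot\beta_t, \dot\gamma_t$ are bounded on $[0,1]$ by the differentiability assumption in Definition~\ref{def:cond:interp}, and since the coupling $\rho(x_0,x_1)$ has finite second moments with $z\sim\mathsf N(0,\mathrm{Id})$ independent of $(x_0,x_1)$, all the terms are integrable, giving $\int_0^1 \E[|\dot I_t|^2]\, dt < \infty$.

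The argument is short and the only subtle ingredient is the second step, where one must remember that the marginal law of $X_t$ along the flow coincides with that of $I_t$; this is precisely what makes the conditional-expectation contraction applicable, and it is where Theorem~\ref{th:pdf0} is used in an essential way. No obstacle looks serious: the proof is essentially Jensen twice plus the transport property.
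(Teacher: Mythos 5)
Your argument is correct and follows essentially the same route as the paper's proof: write $X_{t=1}(x_0)-x_0$ as the time integral of $b_t(X_t(x_0))$, apply Jensen in $t$, use that $X_t(x_0)$ and $I_t$ share the same law, and then apply conditional Jensen to $b_t = \E(\dot I_t\mid I_t)$. The only (minor) caveat is in your finiteness remark: differentiability of $\alpha_t,\beta_t,\gamma_t^2$ does not by itself make $\dot\gamma_t$ bounded (e.g.\ $\gamma_t=\sqrt{2t(1-t)}$ has $|\dot\gamma_t|^2\sim 1/(2t)$ near $t=0$), so integrability of $\E[|\dot I_t|^2]$ actually requires a mild additional condition on $\gamma$ — a point the paper itself leaves implicit.
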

The proof of this proposition is given in ~\cref{sec:proofs}. Minimizing the left hand-side of~\eqref{eq:W2:bound} would achieve optimal transport in the sense of Benamou-Brenier \cite{benamou2000}, and the minimum would give the Wasserstein-2 distance between $\rho_0$ and $\rho_1$. Various works seek to minimize this distance procedurally either by adapting the coupling \cite{pooladian2023multisample, tong2023improving} or by optimizing $\rho_t(x)$ \cite{albergo2022building}, at additional cost. Here we introduce \textit{designed} couplings at no extra cost that can lower the upper bound in \eqref{eq:W2:bound}. This will allow us to show how different couplings enable stricter control of the transport cost in various applications. Let us now discuss a  generic instantiation of our formalism involving a specific choice of $\rho(x_0,x_1)$.

\subsection{Designing data-dependent couplings}
\label{sec:data}
One natural way to allow for a data-dependent coupling between the base and the target is to set
\begin{align}
    \label{eq:data:cond}
    \rho(x_0,x_1) = \rho_1(x_1 ) \rho_0(x_0|x_1 ) \quad \text{with} \\ \int_{\R^d}  \rho_0(x_0|x_1 )\rho_1(x_1 ) dx_1 = \rho_0(x_0 ).
\end{align}
There are many ways to construct the conditional $\rho_0(x_0|x_1)$.
%
In the numerical experiments in~\cref{sec:inp}~\&~\cref{sec:super}, we consider base densities of a variable $x_0$ of the generic form
\begin{equation}
    \label{eq:initial:x0}
    x_0 = m(x_1) + \sigma \zeta,
\end{equation}  
where $m(x_1)\in \R^d$ is some function of $x_1$, possibly random even if conditioned on $x_1$, $\sigma\in \R^{d \times d}$, and $\zeta\sim \mathsf N(0,\Id)$ with $\zeta \perp m(x_1)$.
In this set-up, the corrupted observation $m(x_1)$ (a noisy, partial, or low-resolution image) is determined by the task at hand and available to us, but we are free to choose the design of the term $\sigma\zeta$ in~\eqref{eq:initial:x0} in  ways that can be exploited differently in various applications (and is allowed to depend on any conditional info $\xi$).  Note in particular that, given $m(x_1)$, \eqref{eq:initial:x0} is easy to generate at sampling time. Note also that, if the corrupted observation $m(x_1)$ is deterministic given $x_1$, the conditional probability density of~\eqref{eq:initial:x0} is the Gaussian density with mean $m(x_1)$ and covariance $C=\sigma\sigma^\top$:
\begin{equation}
    \label{eq:Gauss}
    \rho_0(x_0|x_1) = \mathsf N(x_0;m(x_1),C),
\end{equation}
We stress that, even in this case,  $\rho(x_0,x_1) = \rho_1(x_1 ) \rho_0(x_0|x_1 )$ and $\rho_0(x_0) =  \rho_0(x_0|x_1)$ are non-Gaussian densities in general.
In this context, we can use the interpolant from 
\eqref{eq:stochinterpolant}
with $\gamma_t=0$, which reduces to:
\begin{align}
\label{eq:interp:exp}
    I_t &=
    \alpha_t (m(x_1) + 
    \sigma \zeta ) + \beta_t x_1
\end{align}
Note that the score associated to \eqref{eq:interp:exp} is still available because of the factor of $\sigma \zeta$, so long as $\sigma$ is invertible.

\subsection{Reducing transport costs via coupling}
\label{sec:path-length}
In the numerical experiments, we will highlight how the construction of a data-dependent coupling enables us to perform various downstream tasks.
An additional appeal is that data-dependent couplings facilitate \textit{the design of} more efficient transport than standard generation from a Gaussian, as we now show.

The bound on the transportation cost in~\eqref{eq:W2:bound} may be more tightly controlled by the construction of data-dependent couplings and their associated interpolants. In this case, we seek couplings such that $E[|\dot I_t|^2]$ is smaller with coupling than without, i.e. such that
\begin{equation}
\label{eq:bound:coupling}
\begin{aligned}
    & \int_{\R^{3d}} |\dot I_t|^2 \rho(x_0,x_1)\rho_z(z) dx_0 dx_1 dz  \\
& \le \int_{\R^{3d}} |\dot I_t|^2 \rho_0(x_0)\rho_1(x_1) \rho_z(z)  dx_0 dx_1 dz,
\end{aligned}
\end{equation}
where $\dot I_t = \dot\alpha_t x_0 + \dot \beta_t x_1  + \dot \gamma_t z$ is a function of $x_0, x_1$ and $z$. 
A simple way to design such a coupling is to consider~\eqref{eq:Gauss} with $m(x_1) = x_1$ and $C = \sigma^2 \Id$ for some $\sigma>0$, which sets the base distribution to be a noisy version of the target. 
In the case of data-decorruption (which we explore in the numerical experiments), this interpolant directly connects the corrupted conditional density and the uncorrupted density. 
If we choose $\alpha_t  = 1-t$ and $\beta_t  = t$, and set $\gamma_t=0$, then $\dot I_t = x_1-x_0$, and the left hand-side of \eqref{eq:bound:coupling} reduces to $\E[|\sigma z|^2] =  d\sigma^2$, which is less than the right hand-side given by $ 2 \E[|x_1|^2] +d\sigma^2$.

\begin{algorithm}[t]
\caption{Training \label{alg:training}}
\begin{algorithmic}
\STATE \textbf{Input:} Interpolant coefficients $\alpha_t, \beta_t$; velocity model $\hat b$; batch size $n_{\text{b}}$; 
\REPEAT
\FOR {$i = 1,\ldots, n_{\text{b}}$}
\STATE Draw $x^i_1 \sim \rho_1(x_1)$, $\zeta_i \sim \mathcal{N}(0,Id)$, $t_i \sim U(0,1)$.
\STATE Compute $x^i_0 = m(x^i_1) + \sigma \zeta^i$.
\STATE Compute $I_{t_i} = \alpha_{t_i} x^i_0 + \beta_{t_i} x^i_1$.
\ENDFOR
\STATE Compute empirical loss\\ $\hat L_b(\hat b) = n_{\text{b}}^{-1}\sum_{i=1}^{n_{\text{b}}} [| \hat b_{t_i}(I_{t_i})|^2 - 2 \dot I_{t_i} \cdot \hat b_{t_i}(I_{t_i})]$.
\STATE Take gradient step on $\hat L_b(\hat b)$ to update $\hat b$. 
\UNTIL{converged}
\STATE \textbf{Return}: Velocity $\hat{b}$.
\end{algorithmic}
\end{algorithm}

\begin{algorithm}[t]
\caption{Sampling (via forward Euler method) \label{alg:sampling}}
\begin{algorithmic}
\STATE\textbf{Input:} model $\hat b$, corrupted sample $m(x_1)$, $N\in \mathbb{N}$.
\STATE Draw noise $\zeta \sim \mathcal{N}(0,Id)$
\STATE Initialize $\hat X_0 = m(x_1) + \sigma \zeta$
\FOR{$n=0,\ldots,N-1$}
    \STATE $\hat X_{i+1}  = \hat X_i + N^{-1} \hat b_{i/N}(\hat X_i)$
\ENDFOR
\STATE \textbf{Return:} clean sample $\hat X_N$.
\end{algorithmic}
\end{algorithm}

\subsection{Learning and Sampling}
\label{sec:leran:samp}

To learn in this setup, we can evaluate the objective functions~\eqref{eq:obj} over a minibatch of $n_{\text{b}} < n$ data points $x_0^i, x_1^i$ by  using an additional $n_{\text{b}}$ samples $z_i\sim \mathsf N(0,\Id)$ and $t_i\sim U([0,1])$.
This leads to the empirical approximation $\hat{L}_b$ of $L_b$ given by
\begin{equation}
    \label{eq:obj:emp:class}
    \hat L_b(\hat b) = \frac1{n_{\text{b}}} \sum_{i=1}^{n_{\text{b}}} \left[|\hat b_{t_i}(I_{t_i})|^2 - 2\dot I_{t_i}\cdot \hat b_{t_i}(I_{t_i})\right],
\end{equation}
with a similar empirical variant for $L_z$.  
We approximate the functions $b_t(x)$ and $g_t(x)$ with neural networks and minimize these empirical objectives with stochastic gradient descent.
This leads to an approximation of the velocity $b_t(x)$ via~\eqref{eq:g:c} and
of the score via~\eqref{eq:s:c}. 

Generating data requires sampling an $X_{t=0} \sim \rho_0(x_0)$ as an initial condition to be evolved via the probability flow \gls{ode}~\eqref{eq:prob:flow} or the forward \gls{sde}~\eqref{eq:sde:f} to respectively produce a sample $X_{t=1} \sim \rho_1(x_1)$ or $X^F_{t=1} \sim \rho_1(x_1)$. 
Sampling an $x_0$ can be performed by picking data point $x_1$ either from the data set or from some online data acquisition procedure and using it in~\eqref{eq:initial:x0}, or using the assumption that one directly observes $x_0 \sim \rho_0(x_0)$ at inference time (e.g. one receives a partial image). The generated samples from either the probability flow \gls{ode} or forward \gls{sde} will be different from $x_1$, even with the choices $m(x_1) = x_1$ and $C = \sigma^2 \Id$. The probability flow \gls{ode} necessarily produces a single sample of $x_1$ for each $x_0$, while the \gls{sde} produces a collection of samples whose spread can be controlled by the diffusion coefficient $\epsilon_t$. Algorithms \ref{alg:training} and \ref{alg:sampling} depict these training and sampling procedures, respectively.

\section{Numerical experiments}
\label{sec:experiments}
We now explore the interpolants with data-dependent couplings on conditional image generation tasks; we find that the framework is straightforward to scale to high resolution images directly in pixel space.
\subsection{In-painting}
\label{sec:inp}

We consider an in-painting task, whereby $x_1 \in \R^{C\times W \times H}$ denotes an image with~$C$ channels, width~$W$, and height~$H$.
Given a pre-specified mask, the goal is to fill the pixels in the masked region with new values that are consistent with the entirety of the image.
We set the conditioning variable $\xi\in \{0, 1\}^{C\times W \times H}$ and additionally provide the model with any potential class labels.
For simplicity, the mask takes the same value for all channels in a given spatial location in the image.
We define the base density by the relation $x_0 = \xi \circ x_1 + (1 - \xi) \circ \zeta$, where $\circ$ denotes the Hadamard (elementwise) product and $\zeta\in\R^{C\times W \times H}, \zeta \sim \mathsf N(0, \Id)$ denotes random noise used to initialize the pixels within the masked region (separate  noise for each channel).
During training, the mask is drawn randomly by tiling the image into~$64$ tiles; each tile is selected to enter the mask with probability~$p=0.3$.
In our experiments, we set $\rho_1(x_1)$ to correspond to ImageNet (either $256$ or $512$).
This corresponds to using $\rho(x_0,x_1|\xi) = \rho_1(x_1) \rho_0(x_0|x_1,\xi)$. The model
sees the mask; we note that we do not need to 
additionally input the partial image as extra conditioning because it is present, uncorrupted, in $x_t$ for each $t$ because the values are present in $x_0$ and $x_1$.
In the interpolant~\eqref{eq:interp:exp}, we set $\alpha_t  = t$ and $\beta_t  = 1-t$.
In this setup, the velocity field $b_t(x, \xi)$ is such that $b_t(x,\xi)=0$ except in the masked regions.
This follows because $\xi \circ I_t = \xi \circ x_1$ for every $t$, i.e., the unmasked pixels in $I_t$ are always those of $x_1$ for which $\dot I_t=0$.
To take this structural information into account, we can build this property into our neural network model, and mask the output of the approximate velocity field to enforce that the unmasked pixels remain fixed. We note that this method does not necessitate any inference time corrections, such as the replacement method or MCMC.
\begin{table}[b]
\vspace{-0.05cm}
\centering
\caption{\textbf{FID for Inpainting Task.} FID comparison between under two paradigms: a baseline, where $\rho_0$ is a Gaussian with independent coupling to $\rho_1$, and our data-dependent coupling detailed in Section \ref{sec:inp}.}
\label{tab:inpaint}
\small
\begin{tabular}{ll}
\toprule
\textbf{Model} & \textbf{FID-50k} \\
\midrule
Uncoupled Interpolant (Baseline) & 1.35 \\
Dependent Coupling (\textbf{Ours}) & \textbf{1.13} \\
\bottomrule
\end{tabular}
\end{table}

\begin{figure*}[ht]
    \centering    
    \includegraphics[width=0.7\linewidth]{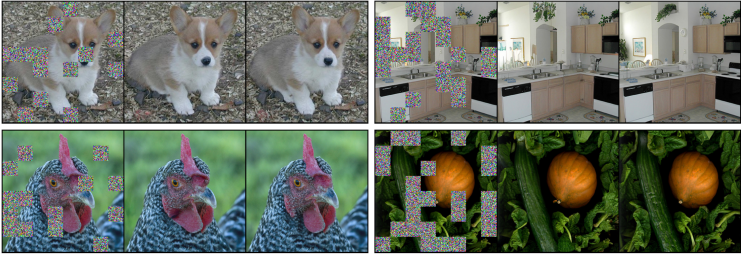}
    \vspace{-5pt}
    \raisebox{8pt}{\rule{0.8\linewidth}{1pt}} 
    \includegraphics[width=0.7\linewidth]{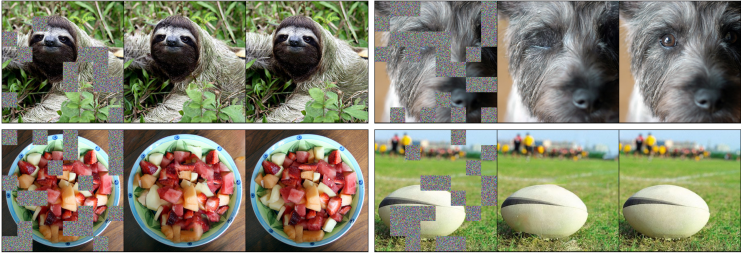}
    \caption{\textbf{Image inpainting:  ImageNet-$256\times 256$ and ImageNet-$512\times 512$.} \textit{Top panels}: Six examples of image in-filling at resolution $256\times 256$, where the left columns display masked images, the center corresponds to in-filled model samples, and the right shows full reference images. The aims are not to recover the precise content of the reference image, but instead, to provide a conditionally valid in-filling.
    \textit{Bottom panels}: Four examples at resolution $512\times 512$.}
    \label{fig:mask}
\end{figure*}

\paragraph{Results.}
For implementation, we parameterize $b_t(x, \xi)$ using the basic U-Net architecture from \cite{ho2020denoising}, where $\xi$ is given to the model as appended channels of the image $x$.
Additional specific experimental details may be found in  \cref{appsec:experimental}.
Samples are shown in \Cref{fig:mask}, as well as 
\cref{fig:front_page}. FIDs are reported in 
\cref{tab:inpaint}. As discussed, the missing areas of the image are defined at time zero as independent normal random variables, depicted as colorful static in the images. In each image triple, the left panel is the base distribution sample $x_0$, the middle is the model sample of $X_{t=1}$ obtained by integrating the probability flow \gls{ode} \eqref{eq:prob:flow}, and the right panel is the ground truth. The generated textures, though different from the full sample, correspond to realistic samples from the conditional densities given the observed content. This is an advantage of probabilistic generative models such as ours over models optimized to fit a mean-square error to a ground truth image.

\subsection{Super-resolution on Imagenet}
\label{sec:super}

\begin{table}
\centering
\caption{\textbf{FID-50k for Super-resolution, 64x64 to 256x256.}
FIDs for baselines taken from 
\cite{saharia2022image, ho2022cascaded, liu20232}.
}
\label{tab:super}
\small
\begin{tabular}{lll}
\toprule
\textbf{Model} & \textbf{Train} & \textbf{Valid}\\
\midrule
Improved DDPM \citep{nichol2021improved} & 12.26 & --\\
SR3 \citep{saharia2022image} & 11.30 &  5.20 \\
ADM \citep{dhariwal2021diffusion} & 7.49 & 3.10 \\
Cascaded Diffusion \citep{ho2022cascaded} & 4.88  & 4.63\\
$\text{I}^2$SB \citep{liu20232} & -- & 2.70 \\
Dependent Coupling (\textbf{Ours}) & \textbf{2.13} & \textbf{2.05} \\
\bottomrule
\end{tabular}
\end{table}
We now consider image super-resolution, in which we would like to produce an image with the same content as a given image but at higher resolution. 
\begin{figure*}[h!]
    \centering
    \includegraphics[width=0.40\linewidth]
    {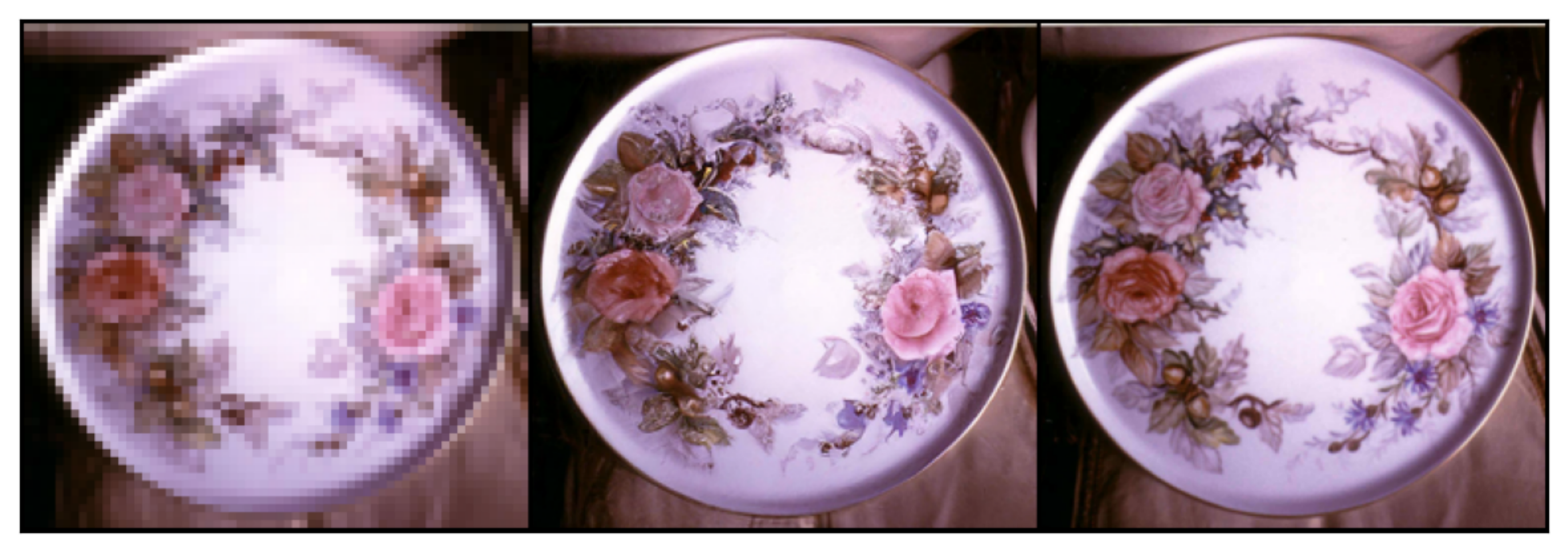}
    \includegraphics[width=0.40\linewidth]
    {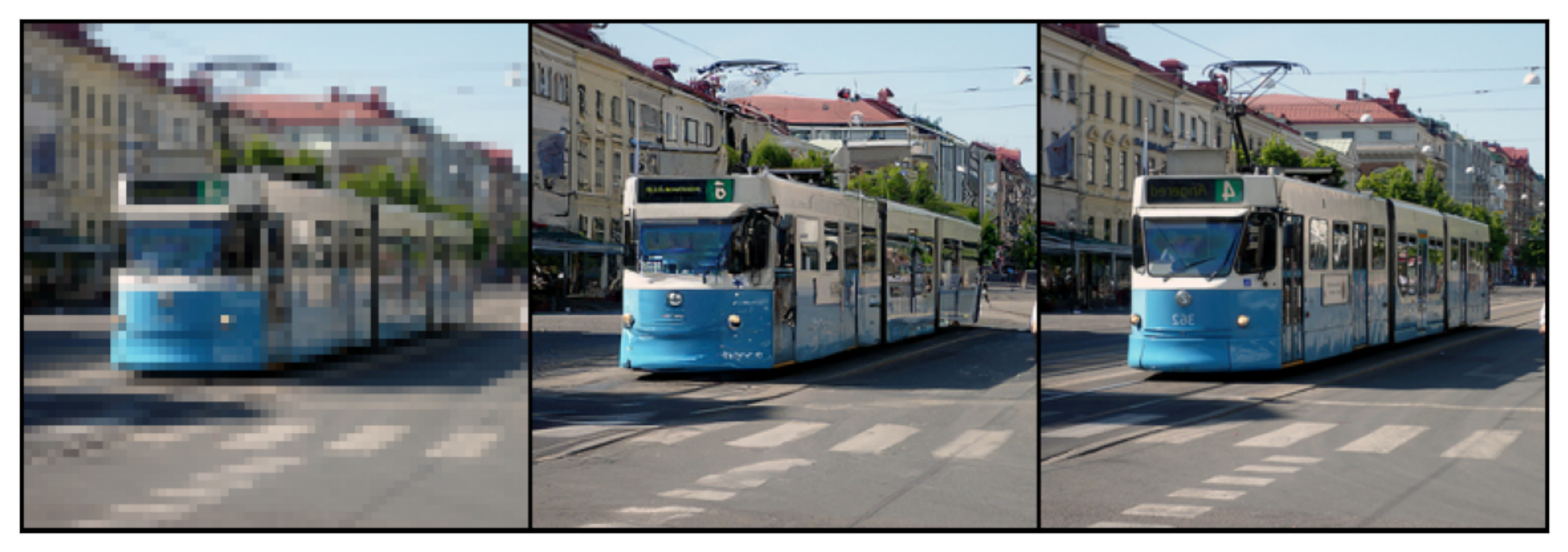}\\
    \includegraphics[width=0.40\linewidth]
    {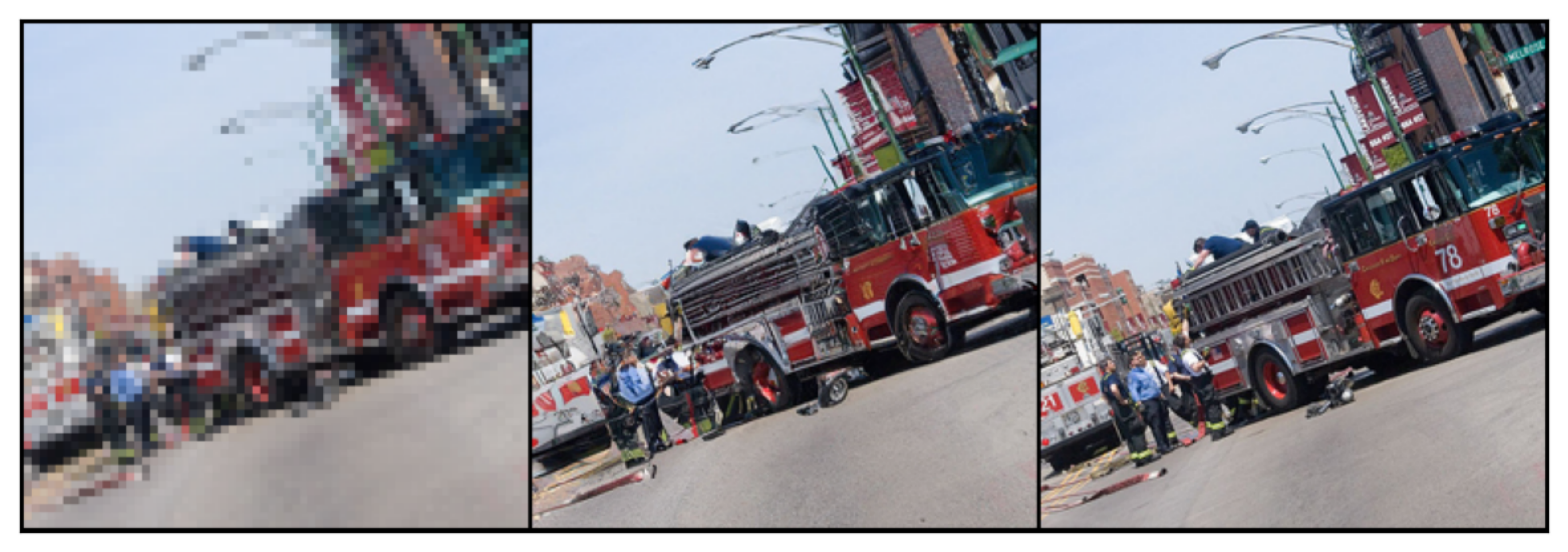}
    \includegraphics[width=0.40\linewidth]
    {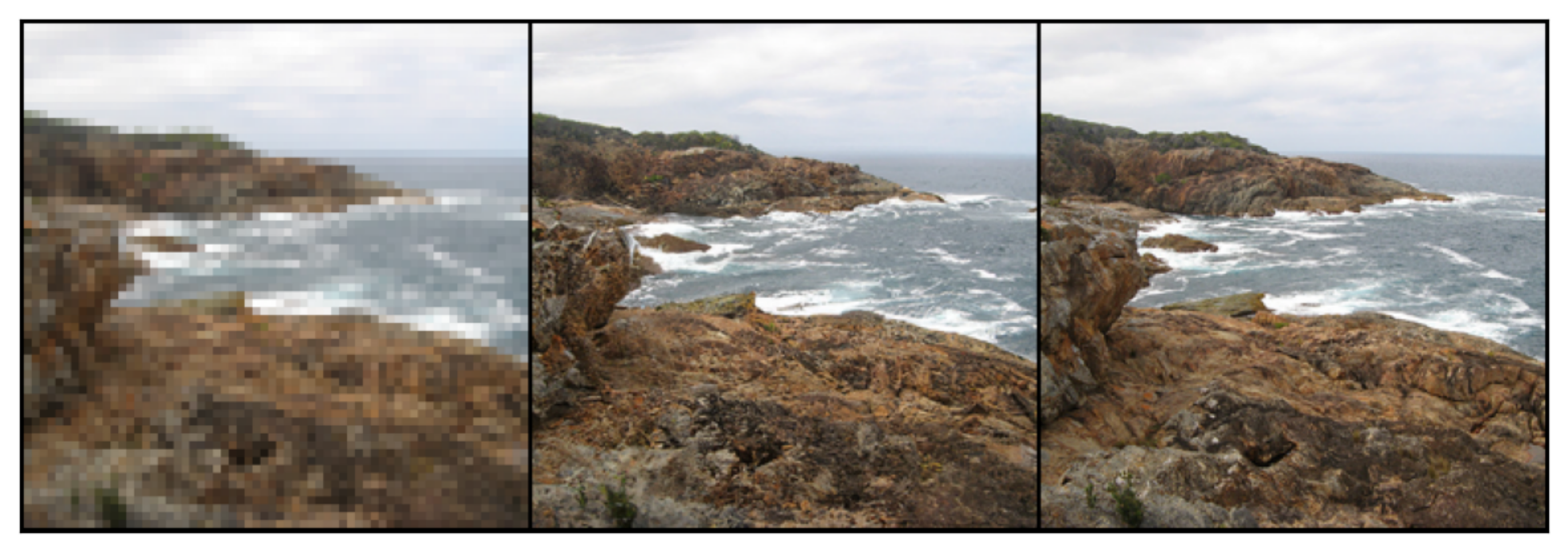}\\
    \includegraphics[width=0.397\linewidth]
    {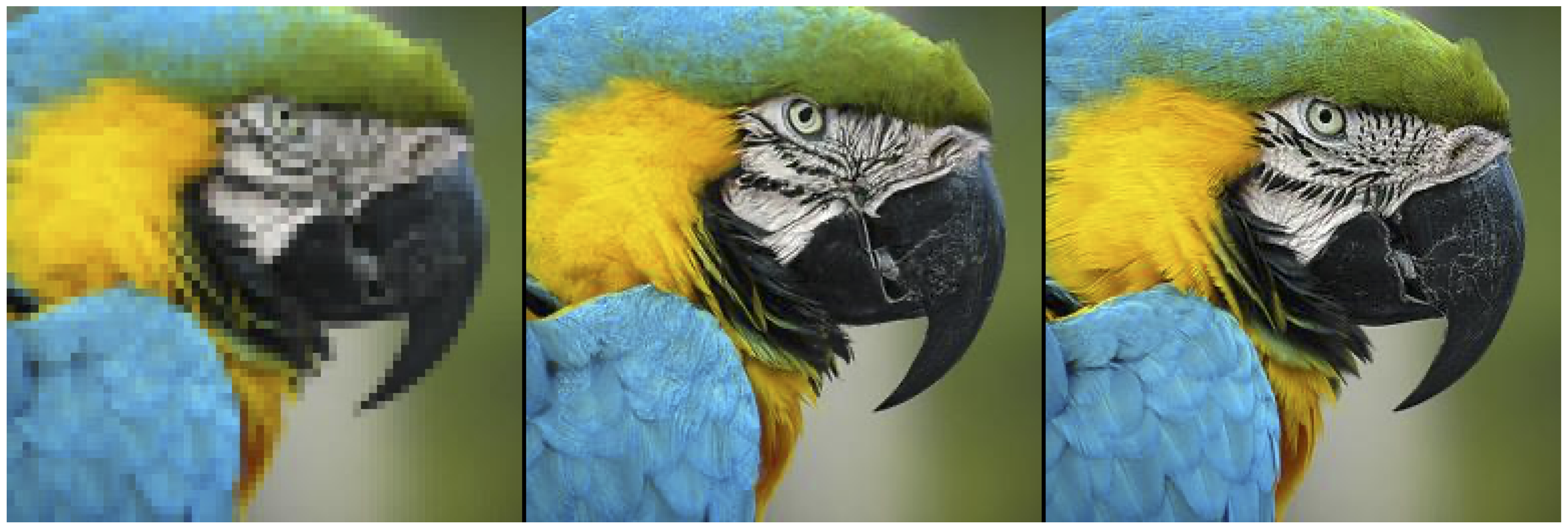}
    \includegraphics[width=0.397\linewidth]
    {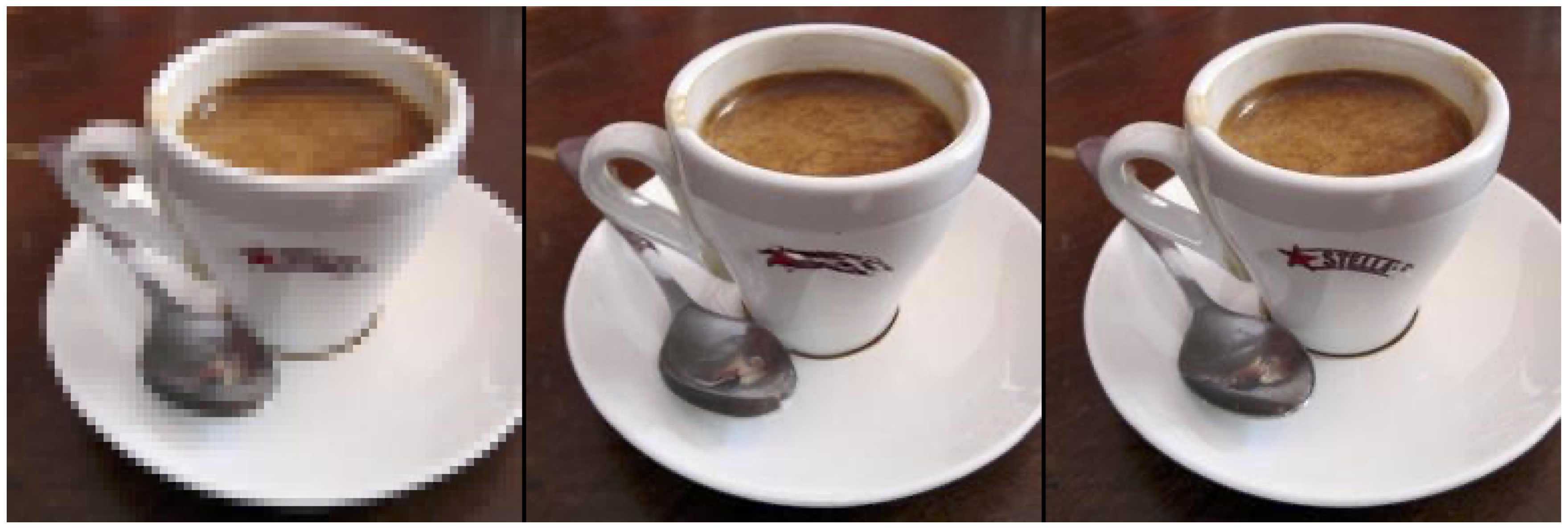}

    \caption{\textbf{Super-resolution:}
    \textit{Top four rows}: Super-resolved images from resolution $64\times 64 \mapsto 256\times 256$, where the left-most image is the lower resolution version, the middle is the model output, and the right is the ground truth. Examples for $256\times 256 \mapsto 512\times 512$ are given in \cref{fig:super512}.}
    \label{fig:super}
\end{figure*}
To this end, we let $x_1 \in \R^{C\times W \times H}$ correspond to a high-resolution image, as in~\cref{sec:inp}.
We denote by $\mathcal{D}:\R^{C\times W\times H} \rightarrow \R^{C\times W_{\text{low}} \times H_{\text{low}}}$ and $\mathcal{U}: \R^{C\times W_{\text{low}} \times H_{\text{low}}} \rightarrow \R^{C\times W \times H}$ image downsampling and upsampling operations, where $W_{\text{low}}$ and $H_{\text{low}}$ denote the width and height of a low-resolution image.
To define the base density, we then set $x_0 = \mathcal{U}\left(\mathcal{D}\left(x_1\right)\right) + \sigma \zeta$ with $\zeta \in \R^{C\times W \times H}$, $\zeta\sim\mathsf{N}(0, \Id)$, and $\sigma>0$.
Defining $x_0$ in this way frames the transport problem such that each starting pixel is proximal to its intended target.
Notice in particular that, with $\sigma = 0$, each $x_0$ would correspond to a lower-dimensional sample embedded in a higher-dimensional space, and the corresponding distribution would be concentrated on a lower-dimensional manifold.
Working with $\sigma>0$ alleviates the associated singularities by adding a small amount of Gaussian noise to smooth the base density so it is well-defined over the entire higher-dimensional ambient space.
In addition, we give the model access to the low-resolution image at all times; this problem setting then corresponds to using $\rho(x_0, x_1 | \xi) = \rho_1(x_1)\rho_0(x_0|x_1, \xi)$ with $\xi = \mathcal{U}\left(\mathcal{D}\left(x_1\right)\right)$.
In the experiments, we set $\rho_1$ to correspond to ImageNet ($256$ or $512$), following prior work~\citep{saharia2022image, ho2022cascaded}.
\paragraph{Results.} 
Similarly to the previous experiment, we append the upsampled low-resolution images $\xi$ to the channel dimension of the input $x$ of the velocity model, and likewise include the ImageNet class labels. Samples are displayed in \cref{fig:super}, as well as \cref{fig:front_page}. 
Similar in layout to the previous experiment, the left panel of each triplet is the low-resolution image, the middle panel is the model sample $X_{t=1}$, and the right panel is the high-resolution image. The differences are easiest to see when zoomed-in. While the increased resolution of the model sample is very noticeable for 64 to 256, the differences even in ground truth images between 256 and 512 are more subtle. We also display FIDs
for the 64x64 to 256x256 task, which has been studied in other works, in \cref{tab:super}.

\section{Discussion, challenges, and future work}
\label{sec:disc}
In this work, we introduced a general framework for constructing data-dependent couplings between base and target densities within the stochastic interpolant formalism.
We provide some suggestions for specific forms of data-dependent coupling, such as choosing for $\rho_0$ a Gaussian distribution with mean and covariance adapted to samples from the target, and showed how they can be used in practical problem settings such as image inpainting and super-resolution.
There are many interesting generative modeling problems that stand to benefit from the incorporation of data-dependent structure.
In the sciences, one potential application is in molecule generation, where we can imagine using data-dependent base distributions to fix a chemical backbone and vary functional groups.
The dependency and conditioning structure needed to accomplish a task like this
is similar to image inpainting.
In machine learning, one potential application is in correcting autoencoding errors produced by an architecture such as a variational autoencoder~\citep{kingma_auto-encoding_2013}, where we could take the target density to be inputs to the autoencoder and the base density to be the output of the autoencoder.

\section*{Acknowledgements}
We thank Raghav Singhal for insightful discussions.
MG and RR are partly supported by the NIH/NHLBI Award R01HL148248, NSF Award 1922658 NRT-HDR: FUTURE Foundations, Translation, and Responsibility for Data Science, NSF CAREER Award 2145542, ONR N00014-23-1-2634, and Apple.
MSA and NMB are funded by the ONR project entitled Mathematical Foundation and Scientific Applications of Machine Learning.
EVE is supported by the National Science Foundation under Awards DMR-1420073, DMS-2012510, and DMS-2134216, by the Simons Collaboration on Wave Turbulence, Grant No. 617006, and by a Vannevar Bush Faculty Fellowship.
    
\section*{Impact Statement}
While this paper presents work whose goal is to advance the field of machine learning, and there are many potential societal consequences of our work, we wish to highlight that generative models, as they are currently used, pose the risk of perpetuating harmful biases and stereotypes.

\bibliography{icml2024/icml2024_cameraready}
\bibliographystyle{icml2024}

\clearpage
\appendix
\onecolumn

\section{Omitted proofs with conditioning variables incorporated}
\label{sec:proofs}

In this Appendix we give the proofs of Theorem~\ref{th:pdf0} and Corollary~\ref{th:gen:mod0} in a more general setup in which we incorporate conditioning variables in the definition of the stochastic interpolant. 

To this end, suppose that each data point $x_1^i\in \R^d$ in the data set comes with a label $\xi_i\in D$, such as a discrete class or a continuous embedding like a text caption, and let us assume that 
this data set comes from a 
\gls{pdf} decomposed as $\rho_1(x_1|\xi) \eta(\xi)$, where $\rho_1(x_1|\xi)$ is the density of the data $x_1$ conditioned on their label~$\xi$, and $\eta(\xi)$ is the density of the label.
In the following, we will somewhat abuse notation and use $\eta(\xi)$ even when $\xi$ is discrete (in which case, $\eta(\xi)$ is a sum of Dirac measures); we will however assume that $\rho_1(x_1|\xi)$ is a proper density.
In this setup we can generalize Definition~\ref{def:cond:interp} as
\begin{definition}[Stochastic interpolant with coupling and conditioning]
\label{def:cond:interp:c}
The  stochastic interpolant $I_t$ is the stochastic process defined as
\begin{equation}
    \label{eq:stochinterpolant:cc}
    I_t = \alpha_t  x_0 + \beta_t  x_1 + \gamma_t  z\qquad t\in[0,1],
\end{equation}
where
\begin{itemize}[leftmargin=0.15in]
\item  $\alpha_t $, $\beta_t $, and $\gamma^2_t$ are differentiable functions of time such that $\alpha_0=\beta_1 =1$,  $\alpha_1=\beta_0 =\gamma_0 = \gamma_1=0$, and $\alpha^2_t+\beta^2_t+\gamma^2_t>0$ for all $t\in[0,1]$.
\item The pair $(x_0,x_1)$ are jointly drawn from a conditional probability density $ \rho(x_0,x_1|\xi)$ such that 
   \begin{align}
       \label{eq:rhoc:cond:cc}
       \int_{\R^d} \rho(x_0,x_1|\xi) dx_1 = \rho_0(x_0|\xi), \\ \int_{\R^d} \rho(x_0,x_1|\xi) dx_0 =  \rho_1(x_1|\xi).
   \end{align}
   \item $z \sim \mathsf N(0,\Id)$, independent of $(x_0,x_1,\xi)$.
\end{itemize}
\end{definition}
Similarly, the functions~\eqref{eq:g:c} become
\begin{align}
    \label{eq:g:cc}
    b_t(x,\xi) &= \E(\dot I_t|I_t = x,\xi), \quad g_t(x,\xi) = \E(z|I_t = x,\xi)
\end{align}
where $\E(\cdot|I_t=x)$ denotes the expectation over $\rho(x_0,x_1|\xi)$ conditional on $I_t=x$, and Theorem~\ref{th:pdf0} becomes:
\begin{restatable}[Transport equation with coupling and conditioning]{theorem}{cont}
    \label{th:pdf}
    The probability distribution of the stochastic interpolant~$I_t$ specified by \cref{def:cond:interp:c} has a density $\rho_t(x|\xi)$ that satisfies $\rho_{t=0}(x|\xi)= \rho_0(x|\xi)$ and $ \rho_{t=1}(x|\xi) = \rho_1(x|\xi)$, and solves the transport equation 
    \begin{align}
        \label{eq:tranp:fpe:cc}
        \partial_t \rho_t(x|\xi)+ \nabla\cdot\left( b_t(x,\xi) \rho_t(x|\xi)\right) = 0,
    \end{align}
    where the velocity field is given in~\eqref{eq:g:cc}.
    Moreover, for every $t$ such that $\gamma_t  \neq 0$, the following identity for the score holds
    \begin{equation}
        \label{eq:s:cc}
        \nabla \log \rho_t(x|\xi)= -\gamma^{-1}_t g_t(x,\xi).
    \end{equation}
    The functions $b$ and $g$ are the unique minimizers of the objective
    \begin{equation}
        \label{eq:obj:cc}
        \begin{aligned}
            L_b(\hat b) &= \int_0^1 \E \left[|\hat b_t(I_t,\xi)|^2 - 2\dot I_t \cdot \hat b_t(I_t,\xi)\right] dt,\\
            L_g(\hat g) &= \int_0^1 \E \left[|\hat g_t(I_t,\xi)|^2 - 2z\cdot \hat g_t(I_t,\xi)\right] dt,
        \end{aligned}
    \end{equation}
    where
    $\E$ denotes an expectation over $(x_0,x_1)\sim \rho(x_0,x_1|\xi)$, $\xi\sim \eta(\xi)$, and $z\sim\mathsf N(0,\Id)$.
\end{restatable}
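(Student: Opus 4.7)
\medskip

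\noindent\textbf{Proof proposal.} The plan is to prove the three statements in turn, essentially mimicking the independent-coupling argument of \citep{albergo2023stochastic} but carrying the conditioning $\xi\sim\eta(\xi)$ through verbatim. Throughout I will fix $\xi\in D$ and work with the conditional law of $(x_0,x_1)$ given $\xi$; since $z$ is independent of $(x_0,x_1,\xi)$, the joint law of $(x_0,x_1,z)$ given $\xi$ is simply $\rho(x_0,x_1|\xi)\otimes \mathsf N(0,\Id)$.

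\emph{Step 1: transport equation.} I would test against an arbitrary smooth compactly supported function $\phi\in C_c^\infty(\R^d)$. Differentiating under the expectation (justified by the finite second moment assumption on $\rho(x_0,x_1|\xi)$ and on $z$, together with smoothness of $\alpha_t,\beta_t,\gamma_t$), one gets
\begin{equation*}
\frac{d}{dt}\E\!\left[\phi(I_t)\,\big|\,\xi\right] = \E\!\left[\dot I_t\cdot\nabla\phi(I_t)\,\big|\,\xi\right].
\end{equation*}
The tower property applied inside the conditional on $\xi$ replaces $\dot I_t$ by $b_t(I_t,\xi)=\E(\dot I_t|I_t,\xi)$, giving $\int b_t(x,\xi)\cdot\nabla\phi(x)\rho_t(x|\xi)dx$. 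Equating this with $\int\phi(x)\partial_t\rho_t(x|\xi)dx$ (weak time-derivative of the left-hand side) and integrating by parts yields \eqref{eq:tranp:fpe:cc} in the weak sense. The boundary identities $\rho_{t=0}(x|\xi)=\rho_0(x|\xi)$ and $\rho_{t=1}(x|\xi)=\rho_1(x|\xi)$ follow immediately from $I_0=x_0$ and $I_1=x_1$ together with the marginal relations in Definition~\ref{def:cond:interp:c}.

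\emph{Step 2: score identity.} For $\gamma_t\neq 0$, I would use that conditional on $(x_0,x_1,\xi)$ the random variable $I_t$ is exactly Gaussian with mean $m:=\alpha_t x_0+\beta_t x_1$ and covariance $\gamma_t^2\Id$, so
\begin{equation*}
\rho_t(x|\xi)=\int \rho(x_0,x_1|\xi)\,(2\pi\gamma_t^2)^{-d/2}\exp\!\Big(-\tfrac{|x-m|^2}{2\gamma_t^2}\Big)\,dx_0\,dx_1.
\end{equation*}
Differentiating in $x$ and bringing the gradient inside the integral produces a factor $-\gamma_t^{-2}(x-m)=-\gamma_t^{-1}z$ (on the event $I_t=x$). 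Dividing by $\rho_t(x|\xi)$ converts the ratio into a conditional expectation, yielding $\nabla\log\rho_t(x|\xi)=-\gamma_t^{-1}\E(z|I_t=x,\xi)=-\gamma_t^{-1}g_t(x,\xi)$, which is~\eqref{eq:s:cc}.

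\emph{Step 3: variational characterization.} For the two quadratic objectives in~\eqref{eq:obj:cc}, I complete the square. For the $b$-objective, the tower property in the cross term gives $\E[\dot I_t\cdot\hat b_t(I_t,\xi)]=\E[b_t(I_t,\xi)\cdot\hat b_t(I_t,\xi)]$, so
\begin{equation*}
L_b(\hat b)=\int_0^1\E\!\left[|\hat b_t(I_t,\xi)-b_t(I_t,\xi)|^2\right]dt-\int_0^1\E\!\left[|b_t(I_t,\xi)|^2\right]dt.
\end{equation*}
The first term is non-negative and vanishes iff $\hat b_t(x,\xi)=b_t(x,\xi)$ in the $L^2(\rho_t(x|\xi)\eta(\xi)dx\,d\xi\,dt)$ sense, proving uniqueness of $b$ as the minimizer. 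The argument for $L_g$ is identical, using independence of $z$ from $(x_0,x_1,\xi)$ to write $\E[z\cdot\hat g_t(I_t,\xi)]=\E[g_t(I_t,\xi)\cdot\hat g_t(I_t,\xi)]$.

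\emph{Main obstacle.} The two delicate points are: (i) justifying the interchange of $d/dt$ and the expectation in Step~1 uniformly on $[0,1]$ (including degenerate endpoints where $\gamma_t=0$), which requires dominating $|\dot I_t|$ by an integrable envelope built from $|x_0|,|x_1|,|z|$, provided by the finite-second-moment hypothesis on $\rho(x_0,x_1|\xi)$; and (ii) handling the endpoints in Step~2, where $\gamma_t=0$ makes the Gaussian kernel degenerate. The score identity is only claimed for $\gamma_t\neq 0$, so this is fine, but one must also check that the minimizers in Step~3 are well-defined as $L^2$ objects on the full time interval; this again follows from the finite-second-moment assumption together with continuity of $\alpha_t,\beta_t,\gamma_t$.
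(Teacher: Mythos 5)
Your proposal is correct. Steps 1 and 3 follow the paper's own argument almost verbatim: the weak-form/test-function derivation of the transport equation via the tower property, and the completion-of-the-square argument for the quadratic objectives (the paper phrases the latter by conditioning on $I_t=x$ inside the cross term, which is the same computation). The one place you take a genuinely different route is the score identity. The paper works in Fourier space: it computes $\E[z\,e^{i\gamma_t k\cdot z}]=i\gamma_t k\,e^{-\gamma_t^2|k|^2/2}$ by Gaussian integration by parts, deduces $\E[z\,e^{ik\cdot I_t}]=i\gamma_t k\,\E[e^{ik\cdot I_t}]$ from $z\perp(x_0,x_1)$, and identifies the right-hand side as the Fourier transform of $-\gamma_t\nabla\rho_t(x|\xi)$, giving $g_t\rho_t=-\gamma_t\nabla\rho_t$. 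You instead differentiate the explicit Gaussian convolution kernel $(2\pi\gamma_t^2)^{-d/2}\exp(-|x-m|^2/(2\gamma_t^2))$ in $x$ under the integral and recognize $-\gamma_t^{-2}(x-m)$ as $-\gamma_t^{-1}z$ on the event $I_t=x$. The two are the same Gaussian integration-by-parts identity expressed in real versus Fourier space; yours is arguably more elementary and makes the restriction $\gamma_t\neq 0$ visibly necessary, while the paper's characteristic-function computation doubles as its justification that $I_t$ has a (positive, smooth) density in the first place -- a point your proposal assumes without comment. That omission is harmless here but worth a sentence if you write this up, since for interior times where $\gamma_t=0$ the existence of $\rho_t(x|\xi)$ rests on the assumption $\alpha_t^2+\beta_t^2+\gamma_t^2>0$ together with $x_0$ and $x_1$ having densities.
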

Note that the objectives~\eqref{eq:obj:cc} can readily be estimated in practice from samples $(x_0,x_1)\sim \rho(x_0,x_1|\xi)$, $z \sim \mathsf N(0, 1)$, and $\xi \sim \eta(\xi)$, which will enable us to learn approximations for use in a generative model.

\begin{proof}
By definition of the stochastic interpolant given in~\eqref{eq:stochinterpolant:cc}, its characteristic function is given by
\begin{equation}
    \label{eq:charact}
    \E [e^{ik\cdot I_t}] = \int_{\R^d\times \R^d} e^{ik\cdot (\alpha_t  x_0+\beta_t  x_1) } \rho(x_0,x_1|\xi) dx_0 dx_1 e^{-\frac12 \gamma^2_t |k|^2},
\end{equation}
where we used $z\perp(x_0,x_1)$ and $z\sim {\sf N}(0,\Id)$.
The smoothness in $k$ of~\eqref{eq:charact} guarantees that the distribution of $I_t$ has a density $\rho_t(x|\xi) > 0$ globally.
By definition of $I_t$, this density $\rho_t(x|\xi)$ satisfies, for any suitable test function $\phi:\R^d \to \R$,
\begin{equation}
    \label{eq:test_def:pdf}
    \begin{aligned}
        &\int_{\R^d} \phi(x) \rho_t(x|\xi) dx= \int_{\R^d\times\R^d\times\R^d} \phi\left(I_t\right) \rho(x_0,x_1|\xi)  (2\pi)^{-d/2} e^{-\frac12 |z|^2} dx_0dx_1dz.
    \end{aligned}
\end{equation}
Above, $I_t= \alpha_t x_0+\beta_t x_1+\gamma_t  z$.
Taking the time derivative of both sides 
\begin{equation}
    \label{eq:test_def:pdf:2}
    \begin{aligned}
        &\int_{\R^d} \phi(x) \partial_t\rho_t(x|\xi) dx\\
        &= \int_{\R^d\times\R^d\times\R^d} \big(\dot \alpha_t x_0+\dot\beta_t x_1+\dot\gamma_t  z\big) \cdot \nabla \phi\left(I_t\right) \rho(x_0,x_1|\xi) (2\pi)^{-d/2} e^{-\frac12 |z|^2} dx_0dx_1dz\\
        &= \int_{\R^d} \E\big[\big(\dot \alpha_t x_0+\dot\beta_t x_1+\dot\gamma_t  z\big) \cdot \nabla \phi(I_t)\big]\big|I_t=x\big] \rho_t(x|\xi) dx\\
        &= \int_{\R^d} \E\big[\dot \alpha_t x_0+\dot\beta_t x_1+\dot\gamma_t  z\big|I_t=x\big] \cdot \nabla \phi(x) \rho_t(x|\xi) dx\
    \end{aligned}
\end{equation}
where we used  the chain rule to get the first equality, the definition of the conditional expectation to get the second, and the tower property $\phi(I_t)=\phi(x)$ conditioned on $I_t=x$ to get the third.  Since
\begin{align}
    \label{eq:b:def:app}
    \E\big[\dot \alpha_t x_0+\dot\beta_t x_1+\dot\gamma_t  z\big|I_t=x\big] =  b_t(x)
\end{align}
by the definition of $b$ in~\eqref{eq:g:cc}, we can therefore write~\eqref{eq:test_def:pdf:2} as
\begin{equation}
    \label{eq:test_def:pdf:3}
    \begin{aligned}
        &\int_{\R^d} \phi(x) \partial_t\rho_t(x|\xi) dx = \int_{\R^d} b_t(x,\xi) \cdot \nabla \phi(x) \rho_t(x|\xi) dx.
    \end{aligned}
\end{equation}
This equation is~\eqref{eq:tranp:fpe:cc} written in weak form.

To establish~\eqref{eq:s:cc}, note that if $\gamma_t >0$, we have
 \begin{equation}
 \label{eq:gbp}
 \begin{aligned}
 \E\big[z  e^{i\gamma_t  k \cdot z }\big] &= -\gamma^{-1}_t(i\partial_k)  \E\big[ e^{i\gamma_t  k \cdot z }\big],\\
 &= -\gamma^{-1}_t(i\partial_k)  e^{-\tfrac12 \gamma^2_t |k|^2},\\
 &= i \gamma_t k e^{-\tfrac12 \gamma^2_t |k|^2}.
 \end{aligned}
 \end{equation}
 As a result, using  $z\perp(x_0,x_1)$, we have
\begin{equation}
 \label{eq:gbp:2}
 \E\big[z  e^{i k \cdot I_t }\big] = i \gamma_t k \E \big[e^{ik\cdot I_t}\big].
 \end{equation}
 Using the properties of the conditional expectation, the left-hand side of this equation can be written
\begin{equation}
\label{eq:gbp:3}
\begin{aligned}
\E\big[z  e^{i k \cdot I_t }\big] & = \int_{\R^d} \E\big[z  e^{i k \cdot I_t }\big|I_t=x\big] \rho_t(x|\xi) dx, \\
& = \int_{\R^d} \E[z  |I_t=x] e^{i k \cdot x } \rho_t(x,\xi) dx, \\
& = \int_{\R^d} g_t(x,\xi) e^{i k \cdot x } \rho_t(x,\xi) dx,
\end{aligned}
\end{equation}
where we used the definition of $g$ in~\eqref{eq:g:cc} to get the last equality.
 Since the right-hand side of~\eqref{eq:gbp:2} is the Fourier transform of $-\gamma_t  \nabla \rho_t(x|\xi)$, we deduce that
 \begin{equation}
 \label{eq:gbp:4}
 g_t(x,\xi) \rho_t(x|\xi) = -\gamma_t  \nabla \rho_t(x|\xi) = -\gamma_t  \nabla \log \rho_t(x|\xi) \, \rho_t(x|\xi).
 \end{equation}
 Since $\rho_t(x|\xi)>0$, this implies~\eqref{eq:s:cc} when $\gamma_t >0$. 

 Finally, to derive~\eqref{eq:obj:cc}, notice that we can write
 \begin{equation}
        \label{eq:obj:cond}
        \begin{aligned}
            L_b(\hat b) &= \int_0^1 \E \left[|\hat b_t(I_t,\xi)|^2 - 2\dot I_t\cdot \hat b_t(I_t,\xi)\right] dt,\\
            &= \int_0^1 \int_{\R^d}\E \left[|\hat b_t(I_t,\xi)|^2 - 2\dot I_t\cdot \hat b_t(I_t,\xi)|I_t = x\right] \rho_t(x|\xi)dxdt\\
            &= \int_0^1 \int_{\R^d} \left[ |\hat b_t(x,\xi)|^2 - 2\E[\dot I_t|I_t=x]\cdot \hat b_t(x,\xi)\right] \rho_t(x|\xi)dxdt\\
            &= \int_0^1 \int_{\R^d} \left[ |\hat b_t(x,\xi)|^2 - 2b_t(x,\xi)\cdot \hat b_t(x,\xi)\right] \rho_t(x|\xi)dxdt
        \end{aligned}
    \end{equation}
where we used the definition of $b$ in~\eqref{eq:g:cc}. The unique minimizer of this objective function is $\hat b_t(x,\xi)=b_t(x,\xi)$, and we can proceed similarly to show that the unique minimizers of $L_g(\hat g)$ is $\hat g_t(x,\xi)=g_t(x,\xi)$, respectively.
\end{proof}

Theorem~\ref{th:pdf} implies the following generalization of Corollary~\ref{th:gen:mod0}:  
\begin{restatable}[Probability flow and diffusions with coupling and conditioning]{corollary}{pflow}
\label{th:gen:mod} 
The solutions to the probability flow equation 
\begin{equation}
    \label{eq:prob:flow:cc}
    \dot X_t = b_t(X_t,\xi)
\end{equation}
enjoy the property that
\begin{align}
X_{t=1} &\sim \rho_1(x_1|\xi)   
\quad \text{ if } \quad X_{t=0} \sim \rho_0(x_0|\xi)\\
X_{t=0} &\sim \rho_0(x_0|\xi)
\quad \text{ if } \quad X_{t=1} \sim \rho_1(x_1|\xi)
\end{align}
In addition, for any $\eps_t\ge0$, solutions to the forward \gls{sde}
 \begin{equation}
    \label{eq:sde:f:cc}
    d X^F_t = b_t(X^F_t,\xi) dt - \eps_t \gamma^{-1}_tg_t(X^F_t,\xi) dt + \sqrt{2\eps_t} dW_t,
\end{equation}
enjoy the property that
\begin{align}
 X^F_{t=1} \sim \rho_1(x_1|\xi) \quad \text{if} \quad X^F_{t=0} \sim \rho_0(x_0|\xi),
\end{align}
and solutions to the backward \gls{sde}
 \begin{equation}
        \label{eq:sde:r:cc}
        d X^R_t = b_t(X^R_t,\xi) dt + \eps_t \gamma^{-1}_t g_t(X^R_t,\xi) dt + \sqrt{2\eps_t} dW_t,
    \end{equation}
enjoy the property that
\begin{align}
 X^R_{t=0} \sim \rho_0(x_0|\xi) \quad 
 \text{if} \quad
 X^R_{t=1} \sim \rho_1(x_1|\xi).
\end{align}
\end{restatable}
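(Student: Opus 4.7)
The plan is to use the transport equation \eqref{eq:tranp:fpe:cc} supplied by Theorem~\ref{th:pdf} as a common anchor: I will show that each of the three dynamics (the probability flow ODE, the forward SDE, and the backward SDE) evolves a density that obeys the same continuity equation with the correct boundary data, so by uniqueness it must coincide with $\rho_t(x|\xi)$. The conditioning variable $\xi$ plays a purely passive role throughout, as each equation treats it as a fixed parameter; I will therefore suppress $\xi$ in the computations and reinsert it at the end, which makes the proof a parametrized version of the unconditional case (Corollary~\ref{th:gen:mod0}).

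For the probability flow ODE \eqref{eq:prob:flow:cc}, I would appeal to the standard fact that if $\Phi_t$ denotes the flow of $\dot X_t=b_t(X_t,\xi)$ and $X_0\sim\rho_0(\cdot|\xi)$, then the push-forward density $\tilde\rho_t = (\Phi_t)_\# \rho_0$ solves the continuity equation with drift $b_t$ and initial datum $\rho_0(\cdot|\xi)$. Since $\rho_t(\cdot|\xi)$ solves the same Cauchy problem by Theorem~\ref{th:pdf}, uniqueness (under the implicit Lipschitz/linear-growth regularity of $b_t$ inherited from the regressor) forces $\tilde\rho_t=\rho_t(\cdot|\xi)$ for all $t\in[0,1]$, yielding $X_1\sim\rho_1(\cdot|\xi)$. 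Time-reversibility of ODEs (replace $t$ by $1-t$ and $b_t$ by $-b_{1-t}$) gives the reverse implication without extra work.

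For the forward SDE \eqref{eq:sde:f:cc} I would substitute the score identity \eqref{eq:s:cc}, $-\gamma_t^{-1} g_t = \nabla\log\rho_t$, so that its drift equals $b_t+\epsilon_t\nabla\log\rho_t$. The associated Fokker--Planck equation for the density $\tilde\rho_t$ of $X^F_t$ is
\begin{equation*}
\partial_t \tilde\rho_t = -\nabla\cdot\bigl((b_t+\epsilon_t\nabla\log\rho_t)\tilde\rho_t\bigr) + \epsilon_t\Delta\tilde\rho_t.
\end{equation*}
Plugging in $\tilde\rho_t=\rho_t$ and using $\rho_t\nabla\log\rho_t=\nabla\rho_t$, the two $\epsilon_t\Delta\rho_t$ contributions cancel, reducing the Fokker--Planck equation exactly to the continuity equation \eqref{eq:tranp:fpe:cc}. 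With matching initial datum $\rho_0(\cdot|\xi)$, uniqueness gives $\tilde\rho_t=\rho_t(\cdot|\xi)$.

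For the backward SDE \eqref{eq:sde:r:cc} the sign of the score term flips, so a direct forward-in-time Fokker--Planck computation leaves an uncancelled $+2\epsilon_t\Delta\rho_t$ residual; this is the main obstacle and forces the equation to be interpreted as a reverse-time SDE with terminal condition at $t=1$. I would handle this by invoking the Anderson--Haussmann--Pardoux time-reversal formula applied to the forward SDE \eqref{eq:sde:f:cc}: setting $Y_s = X^R_{1-s}$ and writing the reverse-time dynamics produces a forward-in-time SDE for $Y_s$ with drift $-b_{1-s}+\epsilon_{1-s}\nabla\log\rho_{1-s}$ and diffusion $\sqrt{2\epsilon_{1-s}}$, whose marginals are $\rho_{1-s}(\cdot|\xi)$; re-expressing this in the original time variable recovers exactly \eqref{eq:sde:r:cc}. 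Hence if $X^R_{t=1}\sim\rho_1(\cdot|\xi)$, one concludes $X^R_{t=0}\sim\rho_0(\cdot|\xi)$. The only subtlety in this step is ensuring the regularity needed for Anderson's theorem (bounded drift and non-degenerate noise, plus a well-defined score), which in our setting follows from the smoothness of $\rho_t(\cdot|\xi)$ guaranteed in the proof of Theorem~\ref{th:pdf} via the characteristic function argument, combined with $\gamma_t>0$ on the relevant interval.
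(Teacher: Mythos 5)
Your proposal is correct and follows essentially the same route as the paper: anchor everything on the transport equation \eqref{eq:tranp:fpe:cc} from Theorem~\ref{th:pdf}, treat $\xi$ as a frozen parameter, identify the probability flow ODE as the characteristics of that equation, and use the score identity \eqref{eq:s:cc} to absorb the $\eps_t\Delta\rho_t$ term so that $\rho_t(\cdot|\xi)$ solves the Fokker--Planck equation of the forward SDE. The only point of divergence is the backward SDE: you invoke the Anderson--Haussmann--Pardoux time-reversal of \eqref{eq:sde:f:cc}, whereas the paper simply rewrites \eqref{eq:tranp:fpe:cc} as a Fokker--Planck equation with negative diffusion coefficient, $\partial_t\rho_t+\nabla\cdot((b_t+\eps_t\gamma_t^{-1}g_t)\rho_t)=-\eps_t\Delta\rho_t$, which is well-posed backward in time. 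These are two presentations of the same fact (your computation of the reversed drift $-b_{1-s}+\eps_{1-s}\nabla\log\rho_{1-s}$ checks out and re-expresses correctly as \eqref{eq:sde:r:cc}); the Anderson route makes the required regularity hypotheses explicit, while the paper's PDE manipulation is more self-contained. Your correct observation that a naive forward-in-time Fokker--Planck computation for \eqref{eq:sde:r:cc} leaves a residual $+2\eps_t\Delta\rho_t$ is precisely why both treatments must interpret that equation as running from $t=1$ to $t=0$.
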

Note that if we additionally draw $\xi$ marginally from $\eta(\xi)$ when we generate the solution to these equations, we can also generate samples from the unconditional $\rho_0(x_0) = \int_D \rho_0(x_0|\xi) \eta(\xi) d\xi$ and $\rho_1(x_1) = \int_D \rho_1(x_1|\xi) \eta(\xi) d\xi$.

\begin{proof} The probability flow \gls{ode} is the characteristic equation of the transport equation~\eqref{eq:tranp:fpe:cc}, which proves the statement about its solutions $X_t$. To establish the statement about the solution of the forward \gls{sde}~\eqref{eq:sde:f:cc}, use  expression~\eqref{eq:s:cc} for $\nabla \log \rho_t(x,\xi)$ together with the identity $ \Delta \rho_t(x,\xi) = \nabla \cdot( \nabla \log \rho_t(x,\xi) \,  \rho_t(x,\xi))$ to write~\eqref{eq:tranp:fpe:cc} as the forward Fokker-Planck equation
\begin{equation}
\label{eq:fpe:c:app}
\partial_t \rho_t(x|\xi)+ \nabla\cdot\left( (b_t(x,\xi) - \eps_t \gamma^{-1}_tg_t(x,\xi)) \rho_t(x|\xi)\right) = \eps_t \Delta \rho_t(x|\xi)
\end{equation}
to be solved forward in time since $\eps_t>0$. To establish the statement about the solution of the reversed \gls{sde}~\eqref{eq:sde:r:cc}, proceed similarly to write~\eqref{eq:tranp:fpe:cc} as the backward Fokker-Planck equation
\begin{equation}
\label{eq:fpe:c:b:app}
\partial_t \rho_t(x|\xi)+ \nabla\cdot\left( (b_t(x,\xi) + \eps_t\gamma^{-1}_t g_t(x,\xi)) \rho_t(x|\xi)\right) = -\eps_t \Delta \rho_t(x|\xi)
\end{equation}
to be solved backward in time since $\eps_t>0$. 
\end{proof}

The generative model arising from Corollary \ref{th:gen:mod0} has an associated transport cost which is the subject of Corollary \ref{th:transport}:
\cost*
\begin{proof}
We have 
\begin{equation}
\label{eq:b:s1}
    \begin{aligned}
    \E_{x_0\sim\rho_0} \big[ |X_{t=1}(x_0) - x_0|^2 \big] & =  \E_{x_0\sim\rho_0}\Big[ \Big|\int_0^1 b_t(X_t(x_0)) dt\Big|^2\Big]\\
    & \le \int_0^1 \E_{x_0\sim\rho_0}\big[ | b_t(X_t(x_0)) |^2\big] dt\\
    & = \E\big[ |b_t(I_t)|^2\big]
\end{aligned}
\end{equation}
where we used the probability flow equation~\eqref{eq:prob:flow} for $X_t$ and the property that the law of $X_t(x_0)$ with $x_0\sim \rho_0$ and $I_t$ coincide.
Using the definition of $b_t(x)$ in~\eqref{eq:g:c} and Jensen's inequality we have that
\begin{equation}
\label{eq:b:s2}
   \E\big[ |b_t(I_t)|^2\big] = \E \big[ \big|\mathbb E[\dot I_t | I_t ] \big| ^2\big] \le \E \big [ \E \big [ |\dot I_t|^2 \big| I_t ] \big] = \E[| \dot I_t |^2 ]
\end{equation}
where the last line is true by the tower property of the conditional expectation. Combining~\eqref{eq:b:s1} and~\eqref{eq:b:s2} establishes the bound in~\eqref{eq:W2:bound}.
\end{proof}

\section{Further experimental details \label{appsec:experimental}}

\paragraph{Architecture} For the velocity model we use the U-net from  \cite{ho2020denoising} as implemented in \href{https://github.com/lucidrains/denoising-diffusion-pytorch/blob/main/denoising_diffusion_pytorch/classifier_free_guidance.py}{lucidrain's denoising-diffusion-pytorch} repository; this variant of the architecture includes embeddings to condition on class labels. We use the following hyperparameters:
\begin{itemize}
    \item Dim Mults: (1,1,2,3,4)
    \item Dim (channels): 256
    \item Resnet block groups: 8
    \item Leanred Sinusoidal Cond: True
    \item Learned Sinusoidal Dim: 32
    \item Attention Dim Head: 64
    \item Attention Heads: 4
    \item Random Fourier Features: False
\end{itemize}

\paragraph{Image-shaped conditioning in the Unet.} For image-shaped conditioning, we follow \cite{ho2022cascaded} and append upsampled low-resolution images to the input $x_t$ at each time step to the velocity model. We also condition on the missingness masks for in-painting by appending them to $x_t$. 

\paragraph{Optimization.} We use Adam optimizer 
\citep{kingma2014adam}, starting at learning rate 
2e-4 with the StepLR scheduler which scales the learning rate by $\gamma=.99$ every $N=1000$ steps. We use no weight decay. We clip gradient norms at $10,000$ (this is the norm of the entire set of parameters taken as a vector, the default type of norm clipping in PyTorch library). 

\paragraph{Integration for sampling} We use the Dopri solver from the torchdiffeq library  \citep{torchdiffeq}.

\paragraph{Miscellaneous}
We use Pytorch library along with Lightning Fabric to handle parallelism.

\paragraph{Below we include additional experimental illustrations in the flavor of the figures in the main text. }

\begin{figure*}
    \centering
    \includegraphics[width=0.90\linewidth]{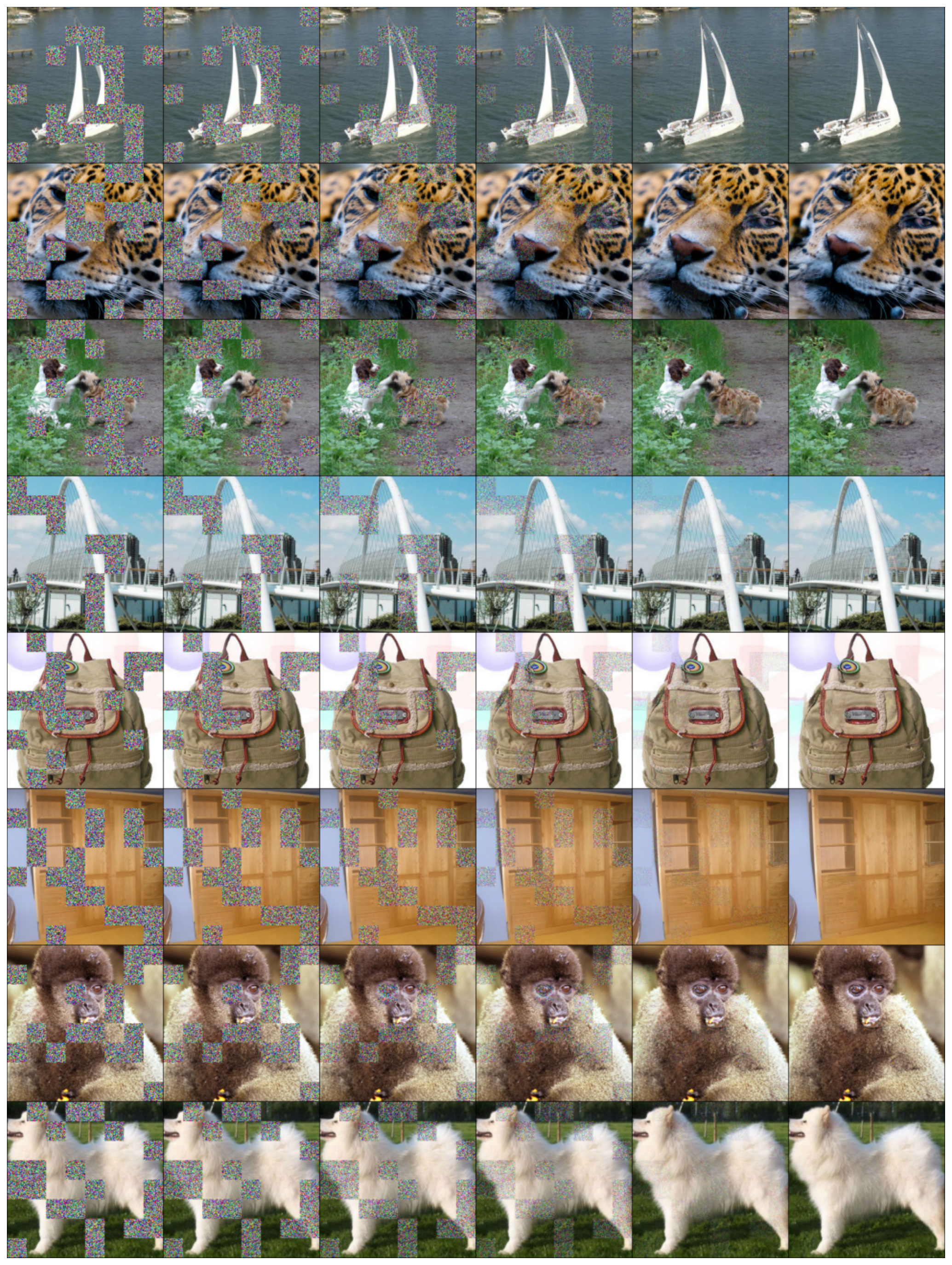}
    \caption{Additional examples of in-filling on the $256\times256$ resolution images, with temporal slices of the probability flow.}
    \label{fig:mask:app:many}
\end{figure*}

\begin{figure*}
    \centering
    \includegraphics[width=0.65\linewidth]{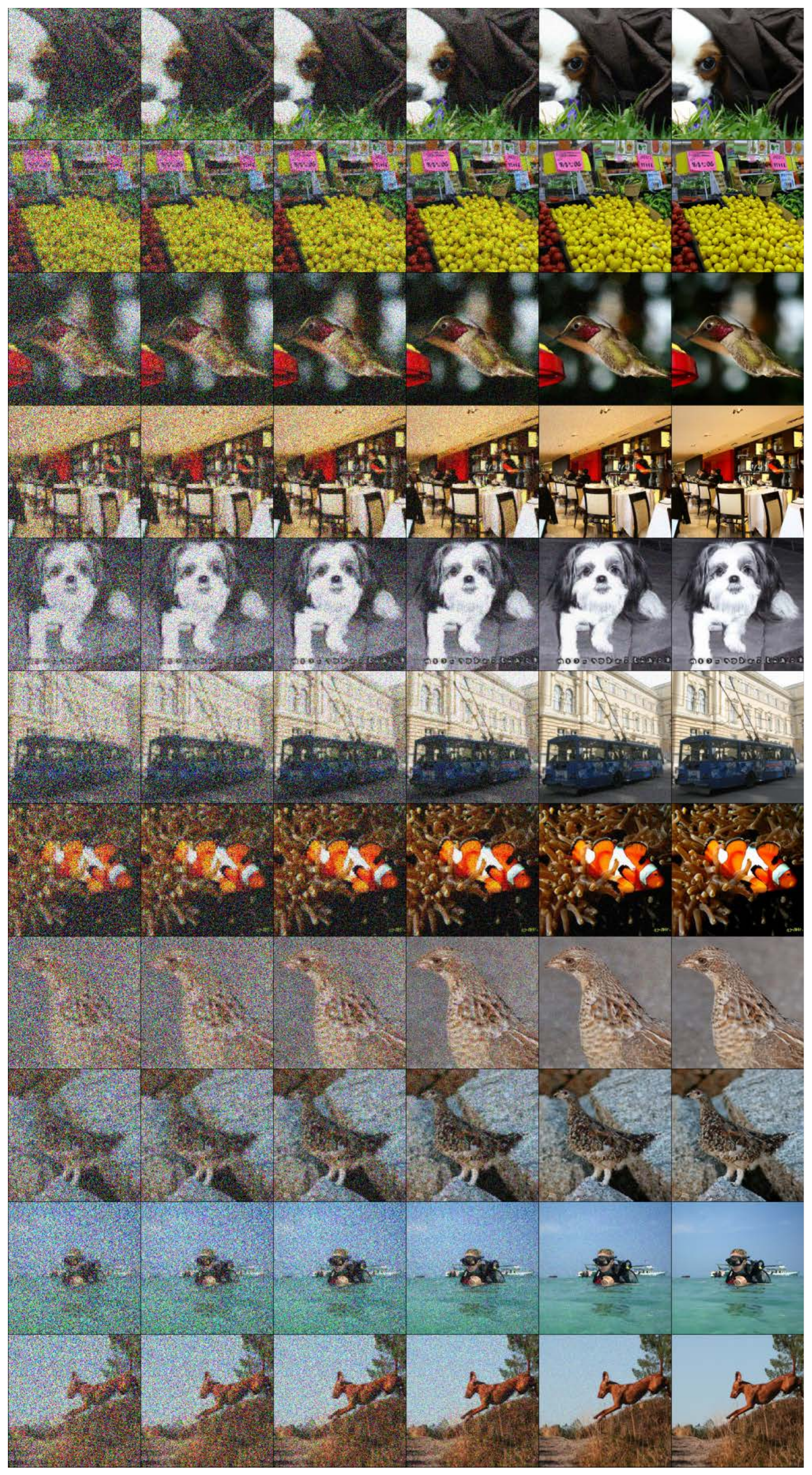}
    \caption{Additional examples of super-resolution
    from $64$ to $256$, with temporal slices of the probability flow.}
    \label{fig:super:app:many}
\end{figure*}

\begin{figure}[t]
    \centering
    \includegraphics[width=0.49\linewidth]
    {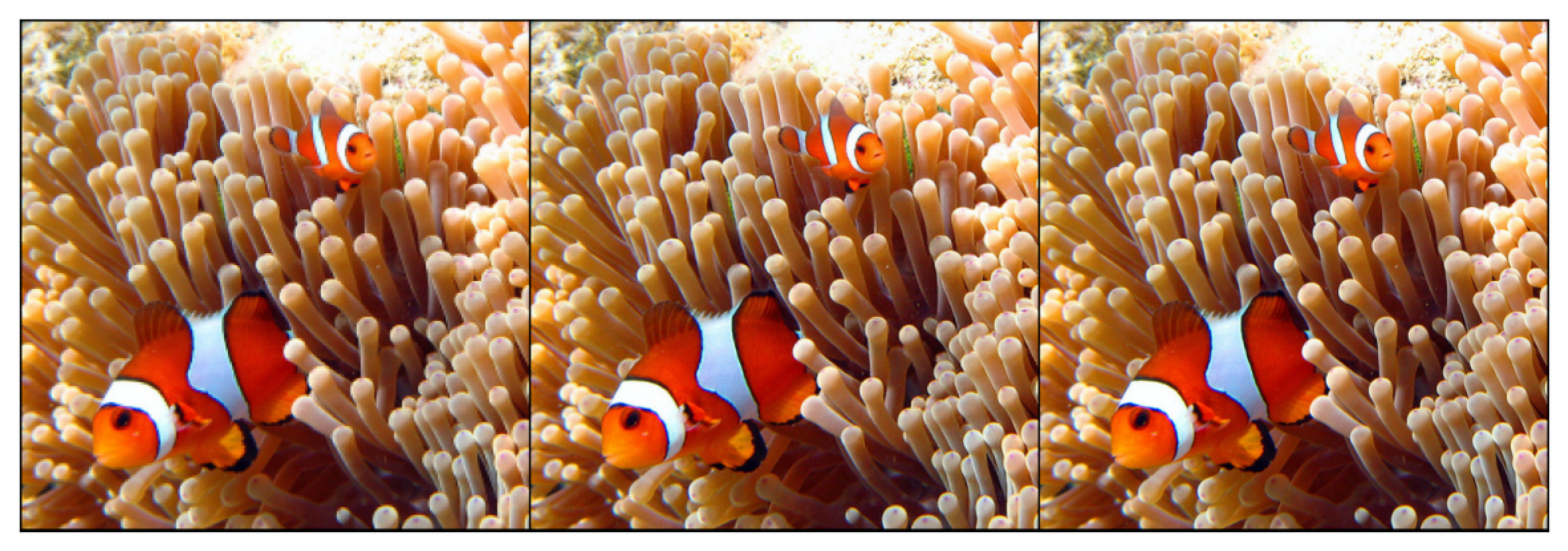}
     \includegraphics[width=0.49\linewidth]
    {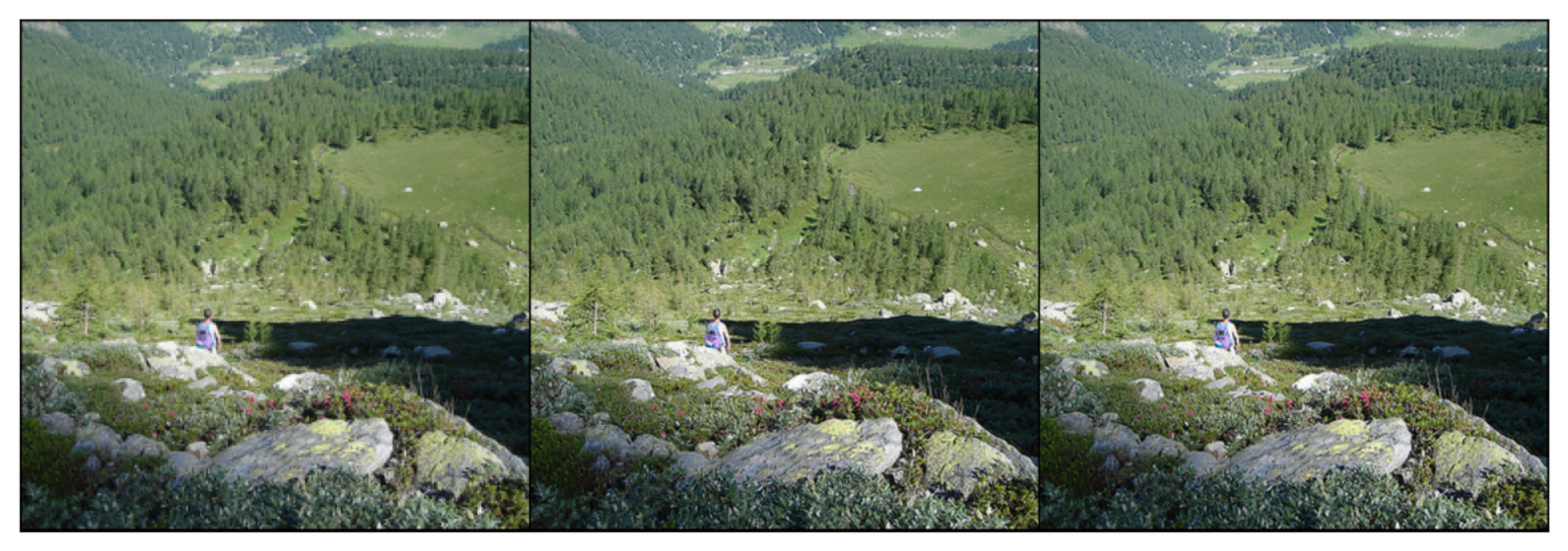}
     \includegraphics[width=0.49\linewidth]
    {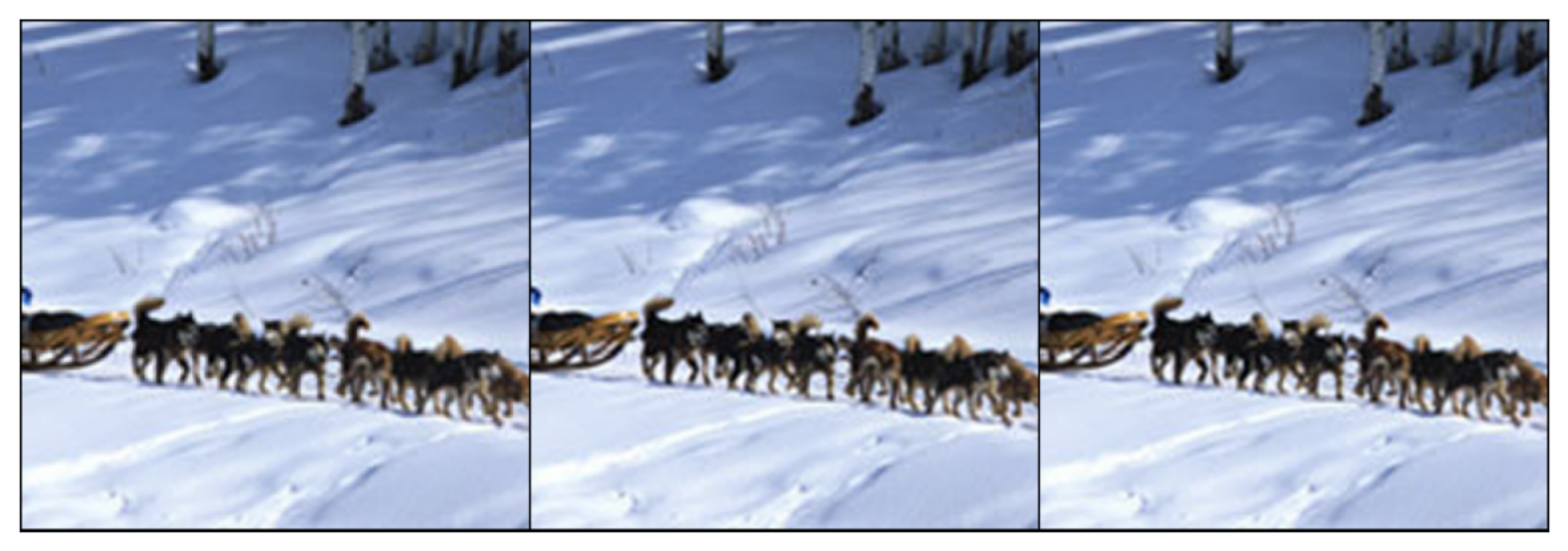}
     \includegraphics[width=0.49\linewidth]
    {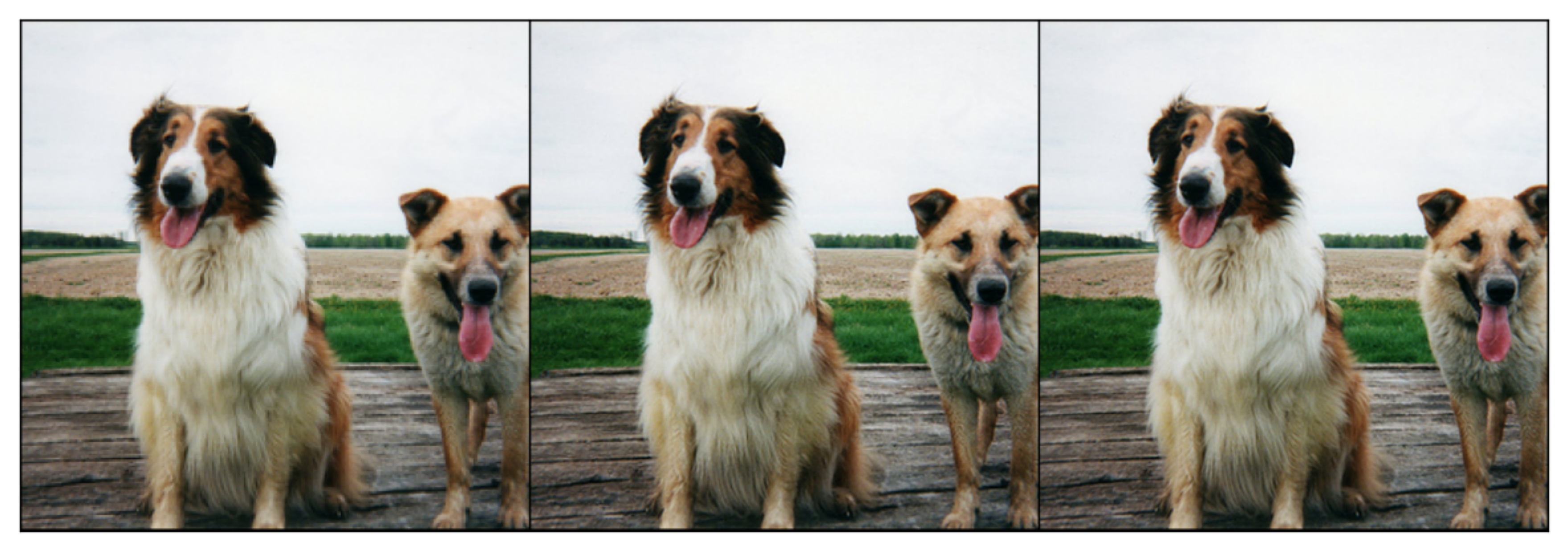}
    \caption{\textbf{Super-resolution:}
    \textit{Top four rows}: Super-resolved images from resolution $256\times 256 \mapsto 512\times 512$, where the left-most image is the lower resolution version, the middle is the model output, and the right is the ground truth.}
    \label{fig:super512}
\end{figure}


\end{document}